\documentclass[preprint,12pt,authoryear]{elsarticle}
\usepackage[right=1.5cm,left=1.5cm,top=1.0cm,bottom=1.5cm]{geometry}
\usepackage{algorithm, algorithmic}  
\usepackage{amsmath, amssymb, amsfonts, amsthm}
\usepackage{hyperref}
\usepackage{mathptmx}
\usepackage{bm}
\usepackage{graphicx}
\usepackage{epstopdf}
\usepackage{subfigure}
\usepackage{caption}
\usepackage{soul}
\usepackage{tabularx}
\usepackage{booktabs}
\usepackage{multirow}
\usepackage{longtable}
\usepackage{color}
\usepackage{bbding} 
\usepackage{cleveref}
\usepackage[section]{placeins}
\biboptions{sort&compress}

\journal{Expert Systems with Applications}
\hypersetup{pdfauthor = whatever}
\newcolumntype{Z}{>{\centering\let\newline\\\arraybackslash\hspace{0pt}}X}

\renewcommand{\algorithmicrequire}{\textbf{Input:}}
\renewcommand{\algorithmicensure}{\textbf{Output:}}

\hypersetup{colorlinks = true}
\newtheorem{theorem}{Theorem}

\begin{document}
\captionsetup[figure]{labelfont={bf},name={Fig.},labelsep=period}
	\begin{frontmatter}
		\title{ Sparse $\epsilon$ insensitive zone bounded asymmetric elastic net  support vector machines for pattern classification}
		
		\author{Haiyan Du}
		\author{Hu Yang\corref{correspondingauthor}}\cortext[correspondingauthor]{Corresponding author. \emph{Email: yh@cqu.edu.cn}}
		
		\address{College of Mathematics and Statistics, Chongqing University, Chongqing, 401331, China}
		
	   \begin{abstract}
              Existing support vector machines(SVM) models are sensitive to noise and lack sparsity, which limits their performance. To address these issues, we combine the elastic net loss with a robust loss framework to construct a sparse $\varepsilon$-insensitive bounded asymmetric elastic net loss, and integrate it with SVM to build $\varepsilon$ Insensitive Zone Bounded Asymmetric Elastic Net Loss-based SVM($\varepsilon$-BAEN-SVM). $\varepsilon$-BAEN-SVM is both sparse and robust. Sparsity is proven by showing that samples inside the $\varepsilon$-insensitive band are not support vectors. Robustness is theoretically guaranteed because the influence function is bounded. To solve the non-convex optimization problem, we design a half-quadratic algorithm based on clipping dual coordinate descent. It transforms the problem into a series of weighted subproblems, improving computational efficiency via the $\varepsilon$ parameter. Experiments on simulated and real datasets show that $\varepsilon$-BAEN-SVM outperforms traditional and existing robust SVMs. It balances sparsity and robustness well in noisy environments. Statistical tests confirm its superiority. Under the Gaussian kernel, it achieves better accuracy and noise insensitivity, validating its effectiveness and practical value.

        \end{abstract}
		
		\begin{keyword}$\epsilon$ insensitive zone bounded asymmetric elastic net loss \sep  classification \sep Sparsity \sep Robustness \sep Half-quadratic algorithm
		\end{keyword}
	\end{frontmatter}
    
    \section{Introduction}
        Among the numerous machine learning algorithms, SVM\citep{1995Support} has become one of the most important tools due to its outstanding predictive performance and solid theoretical foundation, and is widely applied in fields such as image recognition \citep{wang2025,omran2026yixue1}, biomedicine \citep{li2025yiyue2}, financial forecasting\citep{kuo2024hybrid,tang2021robust} , and industrial inspection\citep{liu2026gongye1}. In machine learning, sparsity simplifies models and improves computational efficiency. It also helps us understand the underlying nature of the data. A key advantage of Hinge-SVM is the sparsity of its solution. This sparsity comes from the KKT conditions of the convex quadratic optimization problem that SVMs solve. A Lagrange multiplier is non‑zero only when a sample lies on the margin or violates the margin. This means that the SVM solution is determined entirely by these support vectors. The remaining samples do not participate in model construction. As a result, SVM achieves low memory usage and fast prediction speeds.

      However, some SVM variants inadvertently lose sparsity during optimization. For example, \cite{suykens1999least} proposed LS-SVM. It transforms the optimization problem into linear equations using a quadratic loss function. But this approach makes nearly all training samples become support vectors. Similarly, researchers who modify the loss function to improve SVM performance also unintentionally harm sparsity. \cite{PinSVM6604389} introduced the pinball loss to improve the noise robustness of Hinge-SVM. \cite{zhu2020support} combined pinball loss with Huber loss to handle its non‑differentiable points. They proposed HPSVM. All these loss functions cause every training sample to become a support vector. As a result, the model lacks sparsity.

     To restore sparsity in SVMs, researchers have proposed various sparsification strategies. These include support vector pruning\cite{xia2023dual} and cluster‑based pre‑selection\citep{han2016novel}. However, these methods often sacrifice some accuracy. \cite{vapnik1999overview} formally introduced the $\varepsilon$-insensitive loss function in 1995. Its core idea is to build an ``$\varepsilon$-insensitive band''. This band allows a small error between predicted and true values without any penalty. As a result, the model becomes more robust and also gains sparsity. Based on this idea, \citet{tian2013efficient} proposed a least‑squares support vector machine (LS‑SVM) based on the $\varepsilon$-insensitive loss. They used the parameter $\varepsilon$ to control model sparsity. This improves sparsity while keeping the computational simplicity of LS‑SVM. \citet{liu2016ramp} further combined the $\varepsilon$-insensitive loss with the ramp loss. They proposed a novel sparse ramp loss least‑squares support vector machine. \citet{huang2013support}combined the $\varepsilon$-insensitive loss with the pinball loss. This enhanced the sparsity of Pin‑SVM while maintaining its robustness to resampling. \cite{rastogi2018generalized} improved the traditional Pin‑SVM by proposing a generalized pinball loss. In their method, $\varepsilon_1$ and $\varepsilon_2$ are treated as optimization variables. They also add the term $c_1(\varepsilon_1 + \varepsilon_2)$ to the objective function. This allows the model to automatically learn the optimal insensitivity interval width from the data, which further improves sparsity.
     
        In addition to the studies on improving loss functions or performing variable selection, recent research has also explored SVM sparsity from the perspective of regularization. \cite{tian2023kernel} proposed the Sparse Support Vector Machine (SSVM). They replaced the $L_2$ penalty with an $L_1$ penalty. This allows the model to achieve both sample sparsity and feature sparsit. To efficiently solve sparse SVM models, several optimization algorithms have been widely adopted. These include coordinate descent\citep{wright2015coordinate}, the alternating direction method of multipliers \citep{zhang2012efficient}, and stochastic gradient descent \citep{bottou2012stochastic}. These algorithms can take full advantage of the sparse structure. They significantly reduce computational complexity. As a result, sparse SVMs show good scalability on large‑scale real‑world datasets.
        
         Existing robust support vector machines (SVMs) improve resistance to noise to some extent. However, when label noise and feature noise exist together, the robustness of the model still faces higher demands. In addition, many SVM models lack sparsity. Furthermore, optimizing non-convex bounded loss functions remains a challenge. Based on this background, we first use the advantage of the elastic net loss in slack variables. We combine the $\varepsilon$-insensitive loss with the $L_{aen}$ loss to propose the $L_{aen}^{\varepsilon}$ loss. This enhances the sparsity of EN-SVM. Then, we apply the RML framework \citep{fu2024RLM} to smooth $L_{aen}^{\varepsilon}$. This leads to an asymmetric elastic net loss function called $L_{\text{baen}}^{\varepsilon}$. We apply this loss to support vector classification to build a more robust classification model.
The main contributions of this paper are summarized as follows:
\begin{itemize}
            \item
 We propose the $\varepsilon$-insensitive bounded asymmetric elastic net loss function $L_{\text{baen}}^{\varepsilon}$, whose boundedness and asymmetry enable handling of both label and feature noise. Integrating this loss with support vector machines yields the $\varepsilon$-BAEN-SVM model, which jointly achieves robustness and sparsity.
 \item 
$\varepsilon$-BAEN-SVM is both sparse and robust. Theoretically, we prove that samples inside the $\varepsilon$-insensitive band will not become support vectors. This ensures the model's sparsity. We also derive the influence function of the model and prove that it is bounded. 
 \item 
 We address the non-convex optimization of $\varepsilon$-BAEN-SVM using a half-quadratic algorithm based on clipping dual coordinate descent. This method decomposes the non-convex problem into a series of iteratively weighted $\varepsilon$-insensitive asymmetric elastic net loss SVM subproblems, thereby clarifying the model’s robustness. Moreover, the choice of $\varepsilon$ substantially enhances computational efficiency.
 \item 
 We conduct numerical experiments on simulated and benchmark datasets. The results clearly show that compared with existing SVM methods, $\varepsilon$-BAEN-SVM achieves a better balance between sparsity and robustness under label noise and feature noise.
\end{itemize}

        The rest of the paper is organized as follows: Section 2 reviews recent related studies. In Section 3, we construct $\varepsilon$-BAEN-SVM and solve it using the clipDCD-based HQ algorithm. Next, we provide some theoretical analysis on $\varepsilon$-BAEN-SVM properties in Section 4. In Section 5, the results of artificial and benchmark datasets are utilized to confirm the effectiveness of $\varepsilon$-BAEN-SVM. Finally, Section 6 concludes the paper and discusses future research directions.

    \section{Related work}

In this section, we provide a brief review of related works. Let \(T=\{(x_{1},y_{1}),(x_{2},y_{2}),\ldots,(x_{n},y_{n})\}\) represent the set of training samples, where \(x_{i}\in\mathbb{R}^{p}\) is the \(i\)-th sample and \(y_{i}\in\{-1,+1\}\) is the corresponding label. The samples are organized into a data matrix \(X\in\mathbb{R}^{n\times p}\). Unless otherwise specified, all vectors are considered column vectors.

\subsection{$\varepsilon$-Insensitive Pinball Loss based Support Vector Machine}

To improve the sparsity of the standard Pinball Support Vector Machine (Pin-SVM), \citet{huang2013support} proposed an $\varepsilon$-insensitive version of the pinball loss, which is defined as
\begin{equation}
L_{\tau}^{\varepsilon}(u) = 
\begin{cases}
u - \varepsilon, & u > \varepsilon, \\[4pt]
0, & -\dfrac{\varepsilon}{\tau} \le u \le \varepsilon, \\[8pt]
-\tau\left(u + \dfrac{\varepsilon}{\tau}\right), & u < -\dfrac{\varepsilon}{\tau}.
\end{cases}
\end{equation}
where \(u = 1 - y_i\bigl(w^\top \phi(x_i) + b\bigr)\), \(\tau \ge 0\) controls the asymmetry, and \(\varepsilon \ge 0\) defines the size of the insensitive zone. Combining \(L_{\tau}^{\varepsilon}\) with SVM yields the $\varepsilon$-insensitive pinball loss SVM ($\varepsilon$-PinSVM). 

Introducing Lagrangian multipliers \(\alpha_i \ge 0\), \(\beta_i \ge 0\), \(\gamma_i \ge 0\), kernel \(K(x_i,x_j)=\phi(x_i)^\top \phi(x_j)\) and let \(\lambda_i = \alpha_i - \beta_i\), and applying the Karush–Kuhn–Tucker (KKT) conditions, we obtain the dual problem of \(\varepsilon\)-PinSVM.
\begin{align}
\max_{\lambda,\beta,\gamma} \quad & -\frac{1}{2}\sum_{i=1}^{m}\sum_{j=1}^{m} \lambda_i y_i K(x_i,x_j) y_j \lambda_j + \sum_{i=1}^{m} (\lambda_i + \varepsilon \gamma_i) \label{eq:dual}\\
\text{s.t.} \quad & \sum_{i=1}^{m} \lambda_i y_i = 0, \nonumber\\
& \lambda_i + \bigl(1+\tfrac{1}{\tau}\bigr)\beta_i + \gamma_i = C, \quad i=1,\dots,m, \nonumber\\
& \lambda_i + \beta_i \ge 0,\quad \beta_i \ge 0,\quad \gamma_i \ge 0, \quad i=1,\dots,m. \nonumber
\end{align}
The dual formulation shows that when \(\varepsilon>0\), many \(\lambda_i\) become zero, resulting in a sparse set of support vectors. 
At the same time, similar to the pinball loss, the \(\varepsilon\)-insensitive version maintains robustness to feature noise near the decision boundary.

        \subsection{Elastic net loss based-Support Vector Machine}
        \citet{qi2019ENSVM} put forward elastic net ($L_{en}$) loss, which imposes the elastic-net penalty to slack variables. By introducing $L_{en}$ loss into SVM, Qi proposed an elastic net loss-based SVM (ENSVM) expressed as
             \begin{equation}\begin{aligned}
             & \min_{w,b}\quad\frac{1}{2}\|w\|_{2}^{2}+\frac{c_{1}}{2}\xi^{\mathrm{T}}\xi+c_{2}e^{T}\xi ,\\& \mathrm{s.t.}\quad
             \begin{cases}e-DXw\leq \xi, \\\xi\geq\mathbf{0}, & \end{cases}
            \end{aligned}\end{equation}
            where \(D=\mathrm{diag}(y_1,y_2,\cdotp\cdotp\cdotp,y_n)\), $\xi=(\xi_1,\cdots,\xi_n)^T$ are the slack variables.  \citet{QI2022ENNHSVM} showed through the VTUB of ENNHSVM that the elastic net penalty has unique advantages for slack variables. Thus, improving the performance of  EN loss is very important.  

        To improve the ability of EN-SVM to handle feature noise, Qi designed the asymmetric elastic net ($L_{aen}$) loss \citep{qi2023CAENSVM} motivated by pinball loss as follows:
             \begin{equation}
             L_{aen}(z;p,\tau) =
            \begin{cases}
            \frac{p}{2} z^2 + (1-p) z, & z \geq 0 ,\\
            \tau \left( \frac{p\tau}{2} z^2 - (1-p) z \right), & z < 0,
            \end{cases}
            \label{eq:LAEN}
            \end{equation}
            where \( \tau \in [0,1] \) is derived from the pinball loss and \( p\in [0,1] \) governs the trade-off between the \( l_1 \) norm and the \( l_2 \) norm. According to \eqref{eq:LAEN}, \( L_{\text{aen}} \) like $L_{hinge}$ grows to infinity as \(\mathrm{z}\rightarrow\infty\), making it highly sensitive to outliers (label noise).
            
        \subsection{Robust Support Vector Machine}
         To mitigate the impact of label noise, bounded loss functions have been widely adopted due to their robustness. \citet{fu2024RLM} proposed a general framework of robust loss function for machine learning (RML), inspired by the Blinex loss. The framework is defined as
         \begin{equation}L(x)=\frac{1}{\lambda}\left(1-\frac{1}{1+b(x)\cdot h(x)}\right),\forall \lambda, b >0,\end{equation}
         where$h(z)$ denotes any unbounded loss function excluding linear forms, $\lambda$ is a scaling parameter that controls the upper bound of $\mathcal{L}(z)$ and the flatness of $h(z)$, and $b(z)$ is any non-negative function controlling the growth rate of the smoothed loss $\mathcal{L}(z)$. The RML framework can smoothly and adaptively bound any non-negative function and retain its inherently elegant properties, including symmetry, differentiability, and smoothness.

         Within the RML framework, \citet{ZHANG2024BQSVM} proposed the bounded quantile loss $L_{bq}$ to improve the robustness of Pin-SVM against label noise. The $L_{bq}$ loss is constructed by taking $h(x)=L_{pin}(x)$, which is formulated as
            \begin{equation}L_{bq}(z;\eta,\lambda,\tau)=\frac{1}{\lambda}(1-\frac{1}{1+\eta L_{pin}(z)}).\end{equation}
         Then \citet{ZHANG2024BQSVM} integrated $L_{bq}$ loss into SVM to obtain BQ-SVM. Its definition is as follows:
             \begin{equation}\min_{w,b}\frac{1}{2}(\|w\|_2^2+b^2)+\frac{C}{2}\sum_{i=1}^nL_{bq}(1-y_i(x_i^Tw+b)).\end{equation}
        
        Despite its robustness, the $L_{bq}$ loss remains non-differentiable at certain points, thereby increasing the complexity of the optimization process. To address this limitation, \citet{zhang2025BALSSVM} proposed the bounded least absolute squares loss $L_{bals}$ by setting $h(x)=L_{als}(x)$, which is formulated as:
            \begin{equation}L_{bals}(z;\eta,\lambda,\tau)=\frac{1}{\lambda}(1-\frac{1}{1+\eta L_{als}(z)}).\end{equation}
         Then \citet{zhang2025BALSSVM} combined $L_{bals}$ loss with SVM to obtain BALS-SVM. Its definition is as follows: \begin{equation}\min_{w,b}\frac{1}{2}(\|w\|_2^2+b^2)+\frac{C}{2}\sum_{i=1}^nL_{bals}(1-y_i(x_i^Tw+b)).\end{equation}
         However, However, BALS-SVM and BQ-SVM lack geometric advantage on slack variable, and nearly all samples are support vectors.
       
    \section{ $\varepsilon$ insensitive zone Bounded Asymmetric Elastic Net Loss-Based SVM} 
    \subsection{$\varepsilon$  insensitive zone Bounded Asymmetric Elastic Net Loss} 
       $L_{en}$ has the advantage of geometric rationality of slack variables, thereby enhancing the generalization ability of the model and possessing certain research significance. However, the $L_{en}$ and $L_{aen}$ are both convex loss functions and are sensitive to label noise. The $L_{baen}$ also causes the model to lose sparsity.

To improve the sparsity of SVM and the rationality of slack variables, this paper introduces an $\varepsilon$-insensitive band to improve $L_{aen}$, and proposes the asymmetric elastic net loss with an $\varepsilon$-insensitive band, denoted by $L_{aen}^{\varepsilon}$,
\begin{equation}
    L_{aen}^{\varepsilon}(z)=
\begin{cases}
\dfrac{p}{2}(z-\varepsilon)^2 + (1-p)(z-\varepsilon), & z>\varepsilon,\\[6pt]
0, & -\dfrac{\varepsilon}{\tau}\le z\le \varepsilon,\\[6pt]
\tau\left(\dfrac{p}{2}\left(z+\dfrac{\varepsilon}{\tau}\right)^2-(1-p)\left(z+\dfrac{\varepsilon}{\tau}\right)\right), & z<-\dfrac{\varepsilon}{\tau}.
\end{cases}
\end{equation}

Here, the parameter $\varepsilon>0$ is used to adjust the length of the insensitive band. $z=1-yw^Tx$, and $p,\tau\in(0,1)$ are tuning parameters.

To further improve its robustness to outliers, $L_{aen}^{\varepsilon}$ is smoothed again under the RML framework, and an innovative bounded asymmetric elastic net loss function with an $\varepsilon$-insensitive band, denoted by $L_{baen}^{\varepsilon}$, is proposed. The RML framework can preserve its asymmetry while making $L_{aen}^{\varepsilon}$ bounded. The specific expression of $L_{baen}^{\varepsilon}$ is as follows.
\begin{equation}
L_{baen}^{\varepsilon}(z;\lambda,\eta,\tau,p)=
\begin{cases}
\dfrac{1}{\lambda}\left(1-\dfrac{1}{1+\eta\left(\dfrac{p}{2}(z-\varepsilon)^2+(1-p)(z-\varepsilon)\right)}\right), & z>\varepsilon,\\[8pt]
0, & -\dfrac{\varepsilon}{\tau}\le z\le \varepsilon,\\[8pt]
\dfrac{1}{\lambda}\left(1-\dfrac{1}{1+\eta\tau\left(\dfrac{p}{2}\left(z+\dfrac{\varepsilon}{\tau}\right)^2-(1-p)\left(z+\dfrac{\varepsilon}{\tau}\right)\right)}\right), & z<-\dfrac{\varepsilon}{\tau}.
\end{cases}
\end{equation}

As shown in Fig\ref{fig:baen-svm fenjie}, the parameter $\lambda$ controls the upper bound of the function $L_{baen}^{\varepsilon}(z)$, while $\eta$ determines the smoothness of the loss curve. The larger the value of $\eta$, the faster the loss function reaches its maximum value. The parameter $\tau$ mainly controls the asymmetry of the loss function, thereby enhancing the robustness of the model to feature noise. The value of $p$ affects the sharpness of the curve and is closely related to the geometric characteristics of $\varepsilon$-BAEN-SVM. The detailed theoretical proof is given in Section~3.4.1. Therefore, the loss function $L_{baen}^{\varepsilon}$ possesses desirable properties such as boundedness and asymmetry, which can improve the robustness of the model.
         \begin{figure}[H]
		\centering
		\subfigure[\( L_{baen}^\epsilon \)  with different \(\lambda\) ($\eta=1,p=0.5,\tau=1,\epsilon=0.5$)]{
			\includegraphics[scale=0.5]{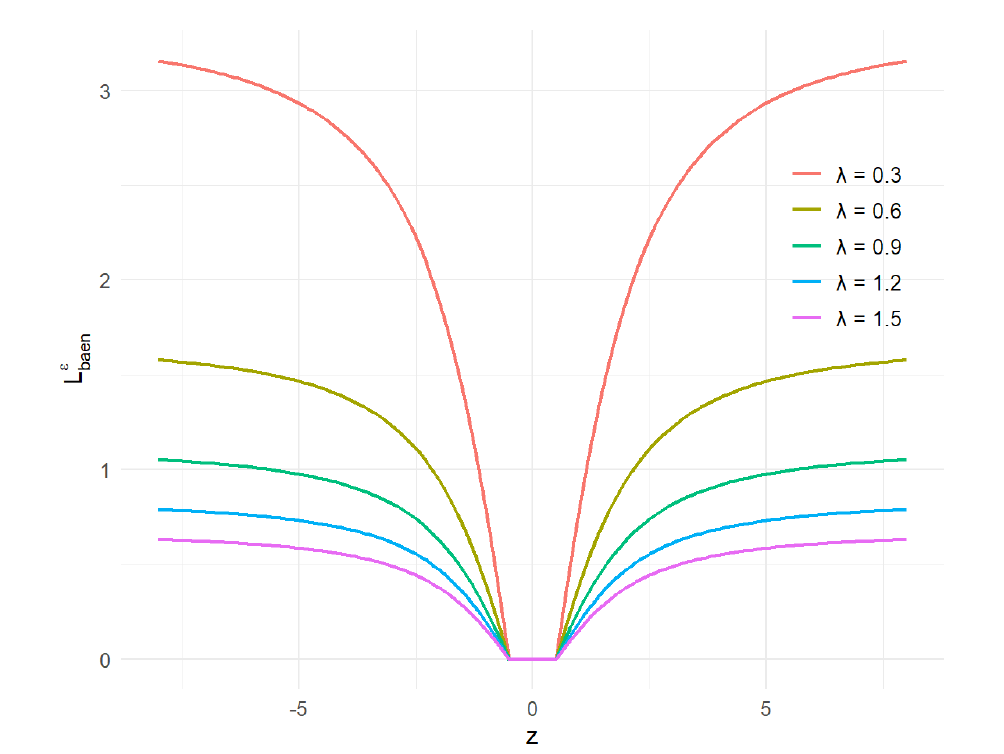}
		}
        	\subfigure[\( L_{baen}^\epsilon \)  with different \(\eta\) ($\lambda=1,p=0.5,\tau=1,\epsilon=0.5$)]{
			\includegraphics[scale=0.5]{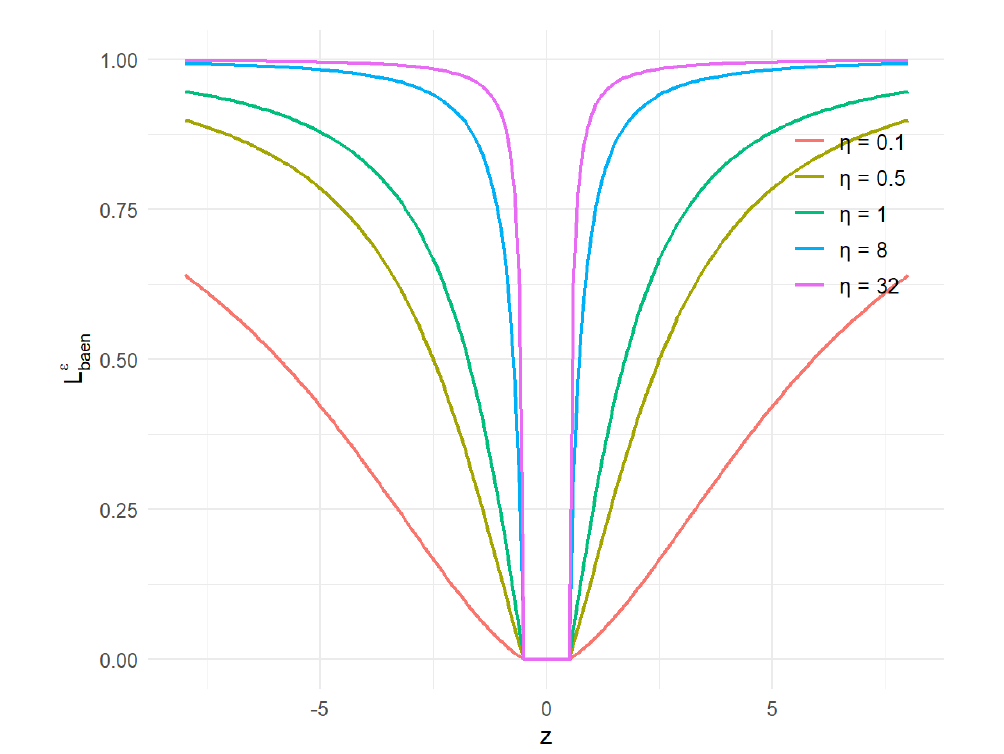}
		}
        \\
		\centering
		\subfigure[\( L_{baen}^\epsilon \)  with different \(\tau\) ($\lambda=1,\eta=1,p=0.5,\epsilon=0.5$)]{
			\includegraphics[scale=0.5]{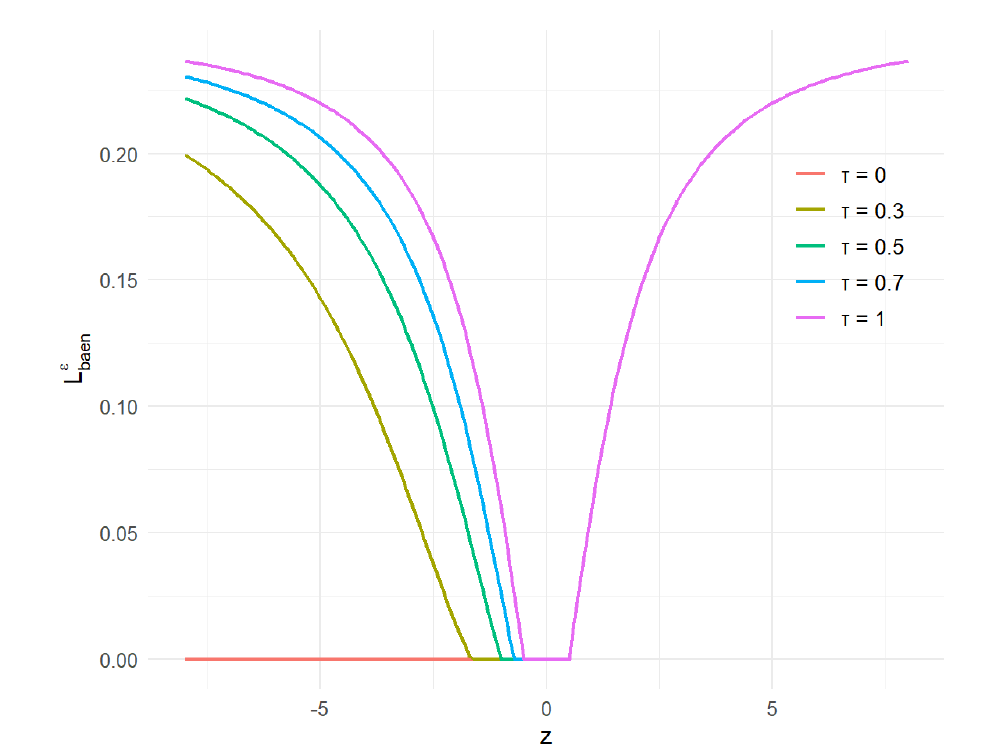}
		}
		\subfigure[\( L_{baen}^\epsilon \)with different $p$ ($\lambda=1,\eta=1,\tau=1,\epsilon=0.5$)]{
			\includegraphics[scale=0.5]{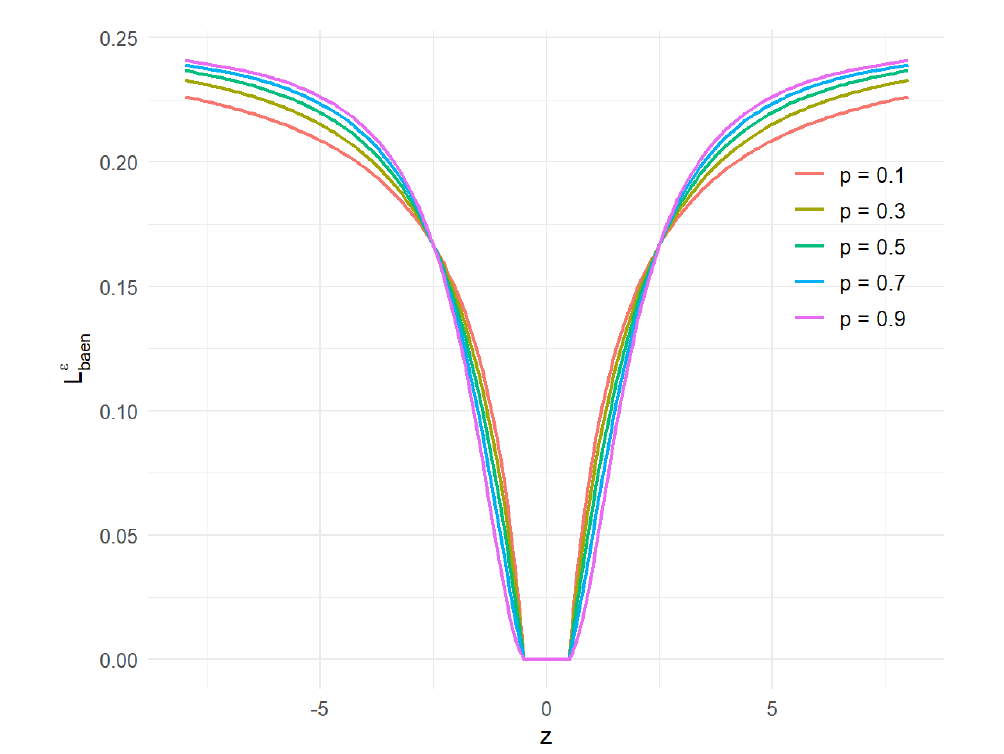}
		}
        \subfigure[\( L_{baen}^\epsilon \)with different $\epsilon$ ($\lambda=1,\eta=1,\tau=1,p=1$)]{
			\includegraphics[scale=0.5]{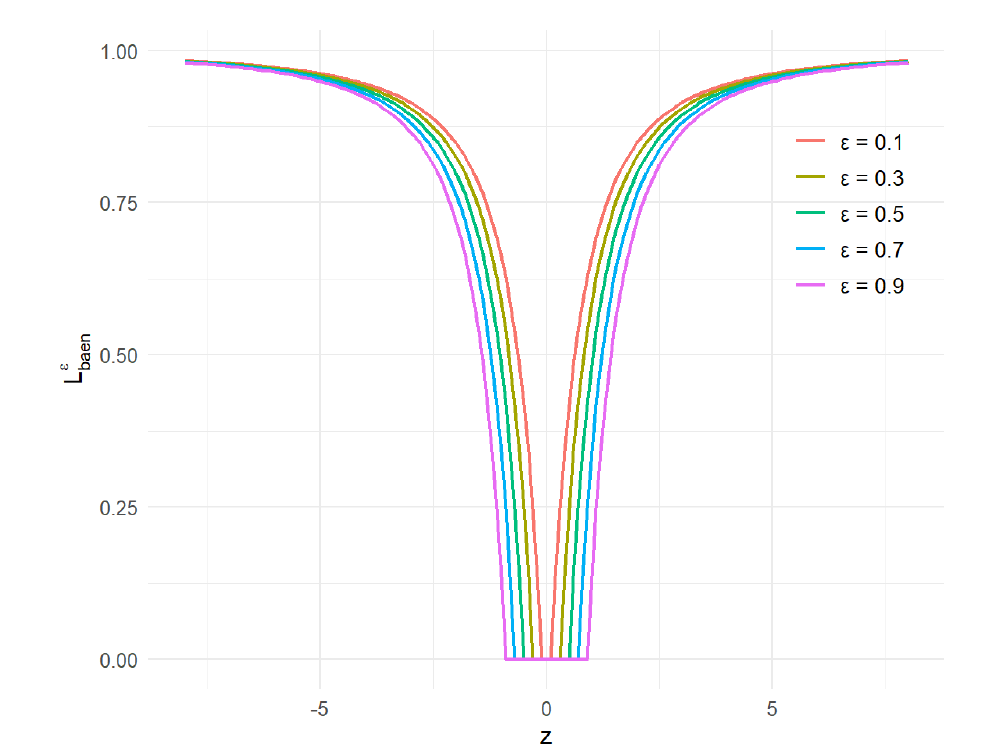}
		}
		\caption{Different parameter of \( L_{baen}^\epsilon\) } 
		\label{fig:baen-svm fenjie}
	\end{figure}

\begin{figure}[H]  
    \centering
    \includegraphics[width=0.9\textwidth]{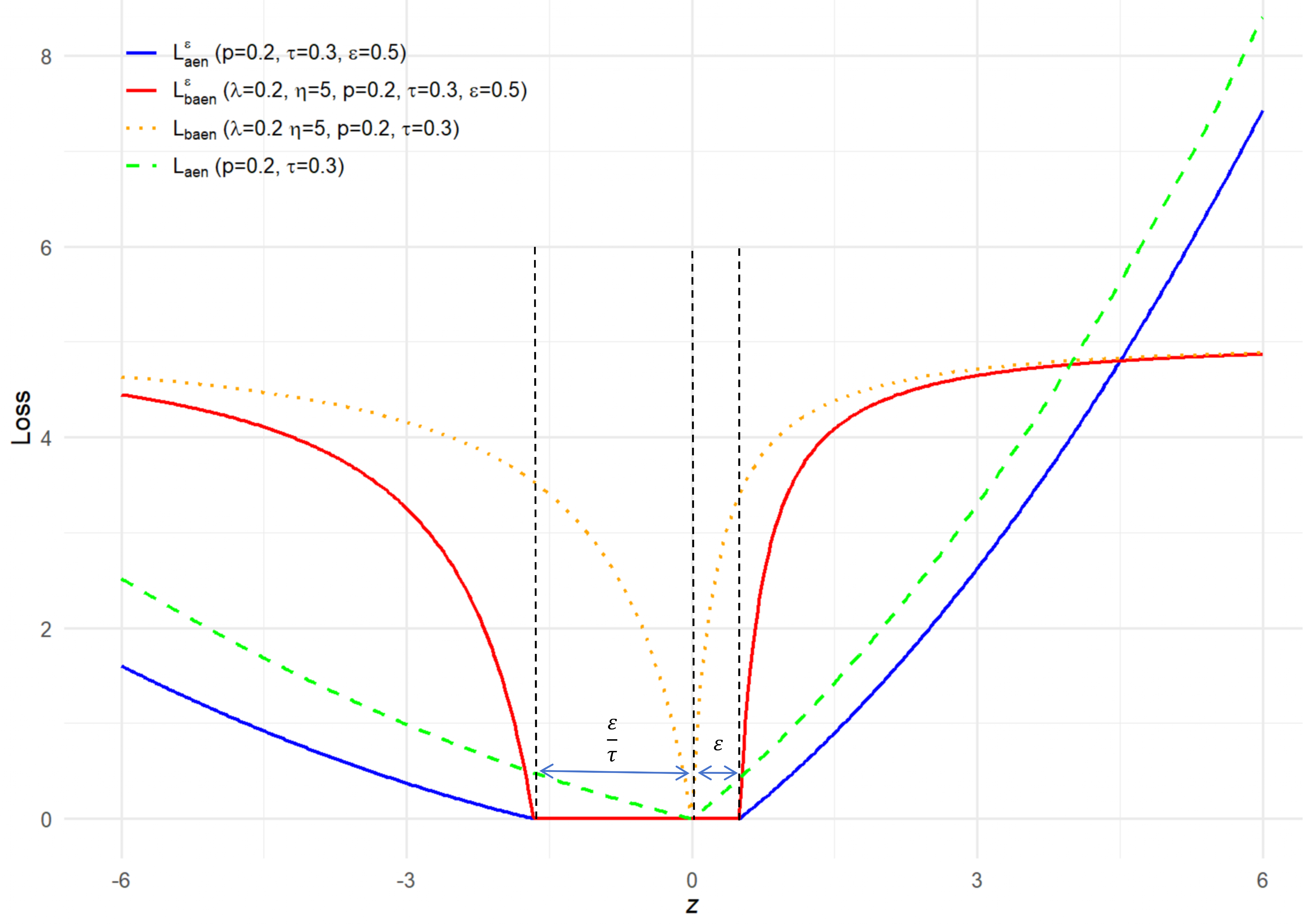} 
    \caption{Comparison of $L_{\mathrm{baen}}^{\varepsilon}$, $L_{\mathrm{baen}}$, $L_{\mathrm{aen}}^{\varepsilon}$ and $L_{\mathrm{aen}}$ losses.}
    \label{fig:loss compare}
\end{figure}

Let $\varepsilon=0.5$ and $\tau=0.3$. Then the loss function curves of $L_{baen}^{\varepsilon}$, $L_{baen}$, $L_{aen}^{\varepsilon}$, and $L_{aen}$ can be obtained, as shown in Fig.~3.2. Compared with $L_{baen}$, the loss $L_{baen}^{\varepsilon}$ mainly improves model sparsity by introducing the $\varepsilon$-insensitive band. When $z\in(-\varepsilon/\tau,\varepsilon)=(-1.6,0.5)$, the value of $L_{baen}^{\varepsilon}$ is $0$, that is, samples in this region produce no loss, and the corresponding dual variables are zero, thereby ensuring the sparsity of the model. In contrast, the loss value of $L_{baen}$ is zero only when $z=0$, and thus it lacks sparsity. The detailed theoretical proof is given in Section~3.4.2.

Compared with $L_{aen}$ and $L_{aen}^{\varepsilon}$, the loss $L_{baen}^{\varepsilon}$ is nonconvex and bounded, and is less sensitive to label noise. Specifically, since $\eta>0$ and $L_{aen}(z)$ increases monotonically with respect to $z$, we have
\begin{equation}
\lim_{z\to\infty} L_{baen}^{\varepsilon}(z)
=
\lim_{z\to\infty}\frac{1}{\lambda}\cdot\frac{\eta L_{aen}^{\varepsilon}(z)}{1+\eta L_{aen}^{\varepsilon}(z)}
=
\frac{1}{\lambda}\lim_{\eta L_{aen}^{\varepsilon}(z)\to\infty}
\frac{1}{\left(\dfrac{1}{\eta L_{aen}^{\varepsilon}(z)}+1\right)}
=
\frac{1}{\lambda}.
\end{equation}

This indicates that the upper bound of the function $L_{baen}^{\varepsilon}$ is $1/\lambda$. However, the nonconvexity of the loss function usually leads to difficulties in numerical computation. Therefore, it is necessary to design an efficient optimization algorithm for solving it.
        
        \subsection{The $\epsilon$-BAEN-SVM Model}
       
        The standard SVM adopts the $L_{hinge}$ loss function, which is not only insensitive to noise, but also suffers from the limitation of geometrically unreasonable slack variables. The $L_{baen}^{\varepsilon}$ constructed in the previous section can not only preserve boundedness and asymmetry, thereby simultaneously addressing label noise and feature noise, but also remedy the lack of sparsity in $L_{baen}$. Therefore, in this section, we apply the $L_{baen}^{\varepsilon}$ loss function to the traditional SVM and propose the robust support vector machine model with the $\varepsilon$-insensitive bounded asymmetric elastic net loss, namely the $\varepsilon$-BAEN-SVM model.

Consider the following binary classification problem with $n$ training samples and $p$ features. Let
\begin{equation}
    T=\{(x_1,y_1),(x_2,y_2),\cdots,(x_n,y_n)\}
\end{equation}

denote the dataset in the given feature space, where $x_i\in\mathbb{R}^p$ is the $i$-th sample, and $y_i\in\{-1,+1\}$ is the corresponding class label. All samples form the data matrix $X\in\mathbb{R}^{n\times p}$. Then, in the linear case, the $\varepsilon$-BAEN-SVM can be expressed as
\begin{equation}
\min_{w,b}\ \frac{1}{2}\left(\|w\|^2+b^2\right)+C\sum_{i=1}^{n}L_{baen}^{\varepsilon}\!\left(1-y_i(w^Tx_i+b)\right).
\end{equation}

Here, $C>0$ is a tuning parameter, $w\in\mathbb{R}^{p\times 1}$ is the normal vector of the hyperplane, and $b$ is the intercept term. Since the intercept term $b$ can be absorbed into the normal vector $w$, we can derive $\tilde{x}_i=(x_i^T,1)$ and $\tilde{w}=(w^T,b)^T$. Meanwhile, $\lambda$ can also be absorbed into $C$. Therefore, we may fix $\lambda=1$, and denote
\begin{equation}
L_{baen1}^{\varepsilon}(z;\eta,p,\tau)=L_{baen}^{\varepsilon}(z;1,\eta,p,\tau).
\end{equation}
Then the model can be simplified as
\begin{equation}\label{eq:18}
\min_{\tilde{w}}\ \frac{1}{2}\|\tilde{w}\|^2+C\sum_{i=1}^{n}L_{baen1}^{\varepsilon}\!\left(1-y_i\tilde{w}^T\tilde{x}_i;\eta,p,\tau\right).
\end{equation}

In this formulation, $\frac{1}{2}\|\tilde{w}\|^2$ is the regularization term used to measure model complexity, and
\begin{equation}
\sum_{i=1}^{n}L_{baen1}^{\varepsilon}\!\left(1-y_i\tilde{w}^T\tilde{x}_i;\eta,p,\tau\right)
\end{equation}
represents the $\varepsilon$-insensitive bounded asymmetric elastic net loss, which is used to measure the empirical risk. $C>0$ is a tuning parameter.
            
        \subsection{The clipDCD-based HQ Algorithm for $\epsilon$-BAEN-SVM}
Since the ADMM algorithm does not involve inner product operations when solving the coefficient vector during computation, it increases the difficulty of solving the nonlinear $\varepsilon$-BAEN-SVM. Therefore, inspired by the work of \cite{zhang2025BALSSVM}, we design a Half Quadratic based Clipping Dual Coordinate Descent algorithm, abbreviated as HQ-ClipDCD, to solve the nonlinear $\varepsilon$-BAEN-SVM model.

Through straightforward simplification, the original optimization problem \eqref{eq:18} can be equivalently written as
\begin{equation}
\label{eq:44}
\max_{\tilde{w}}
-\frac{1}{2}\|\tilde{w}\|^2
+
C\sum_{i=1}^{n}
\frac{1}{1+\eta L_{aen}^{\varepsilon}\bigl(1-y_i\tilde{w}^{T}\phi(\tilde{x}_i)\bigr)}.
\end{equation}

 we can further rewrite the objective function in (\ref{eq:44}) as
\begin{equation}
\label{eq:45}
\begin{aligned}
&-\frac{1}{2}\|\tilde{w}\|^2
+ C\sum_{i=1}^{n}
\sup_{v_i<0}
\left\{
\eta L_{aen}^{\varepsilon}\bigl(1-y_i\tilde{w}^{T}\phi(\tilde{x}_i)\bigr)v_i-g(v_i)
\right\} \\
&=
\sup_{v<0}
\left\{
-\frac{1}{2}\|\tilde{w}\|^2
+ C\sum_{i=1}^{n}
\left(
\eta L_{aen}^{\varepsilon}\bigl(1-y_i\tilde{w}^{T}\phi(\tilde{x}_i)\bigr)v_i-g(v_i)
\right)
\right\}.
\end{aligned}
\end{equation}

According to (\ref{eq:45}), it follows that (\ref{eq:44}) is equivalent to
\begin{equation}
\label{eq:46}
\max_{\tilde{w},\,v<0}
-\frac{1}{2}\|\tilde{w}\|^2
+ C\sum_{i=1}^{n}
\left(
\lambda L_{aen}^{\varepsilon}\bigl(1-y_i\tilde{w}^{T}\phi(\tilde{x}_i)\bigr)v_i-g(v_i)
\right).
\end{equation}

Next, an alternating iterative algorithm is designed to solve (\ref{eq:46}). In brief, the algorithm can be summarized as follows: first optimize \(v\) for a given \(\tilde{w}\), and then optimize \(\tilde{w}\) for a given \(v\). Specifically, suppose that \(\tilde{w}\) is given at the \(s\)-th iteration, then (\ref{eq:46}) is equivalent to
\begin{equation}
\label{eq:47}
\max_{v^s<0}\sum_{i=1}^{n}
\left(
\lambda L_{aen}^{\varepsilon}\bigl(1-y_i\tilde{w}^{T}\phi(\tilde{x}_i)\bigr)v_i^s-g(v_i^s)
\right).
\end{equation}

 we update
\begin{equation}
\label{eq:48}
v^s=-\frac{1}{\left(1+\lambda L_{aen}^{\varepsilon}\bigl(1-y_i\tilde{w}^{T}\phi(\tilde{x}_i)\bigr)\right)^2}<0.
\end{equation}

Then, by fixing \(v\) as \(v^s\), we update \(\tilde{w}^s\) through
\begin{equation}
\label{eq:49}
\tilde{w}^{\,s}
=
\arg\min_{\tilde{w}}
\frac{1}{2}\|\tilde{w}\|^2
+
C\sum_{i=1}^{n}
L_{aen}^{\varepsilon}\bigl(1-y_i\tilde{w}^{T}\phi(\tilde{x}_i)\bigr)(-v_i).
\end{equation}

Define \(\omega_i=C\lambda(-v_i)\). Then (\ref{eq:49}) can be rewritten as a support vector machine based on the weighted asymmetric elastic net loss with an \(\varepsilon\)-insensitive band, namely the \(\varepsilon\)-AEN-WSVM:
\begin{equation}
\label{eq:410}
\min_{\tilde{w}} \frac{1}{2}\|\tilde{w}\|^2+\sum_{i=1}^{n}\omega_iL_{aen}^{\varepsilon}\!\left(1-y_i\tilde{w}^{T}\phi(\tilde{x}_i)\right).
\end{equation}

Let \(X=(x_1^T,x_2^T,\cdots,x_n^T)^T,
D=\operatorname{diag}(y_1,y_2,\cdots,y_n),
\Omega=\operatorname{diag}(\omega_1,\omega_2,\cdots,\omega_n).
\)
Then (\ref{eq:410}) can be written in matrix form as
\begin{equation}
\label{eq:411}
\begin{aligned}
\min_{\tilde{w},\xi_+,\xi_-}\quad &
\frac{1}{2}\|\tilde{w}\|^2
+\frac{p}{2}\xi_+^T\Omega\xi_+
+(1-p)e^T\Omega\xi_+
+\frac{p\tau}{2}\xi_-^T\Omega\xi_- +\tau(1-p)e^T\Omega\xi_- \\
\text{s.t.}\quad &
e-DA\tilde{w}\le \xi_+ + \varepsilon, DA\tilde{w}-e\le \xi_- + \frac{\varepsilon}{\tau}.
\end{aligned}
\end{equation}

Here,
\begin{equation}
\label{eq:411a}
A=\bigl(\phi(\tilde{x}_1)^T,\phi(\tilde{x}_2)^T,\cdots,\phi(\tilde{x}_n)^T\bigr)^T.
\end{equation}
By introducing the Lagrange multiplier vectors \(\alpha\ge 0\) and \(\beta\ge 0\), the Lagrangian function is defined as
\begin{equation}
\label{eq:412}
\begin{aligned}
L(\tilde{w},\xi_+,\xi_-,\alpha,\beta)
={}&
\frac{1}{2}\|\tilde{w}\|^2
+\frac{p}{2}\xi_+^T\Omega\xi_+
+(1-p)e^T\Omega\xi_+
+\frac{p\tau}{2}\xi_-^T\Omega\xi_- +\tau(1-p)e^T\Omega\xi_-\\
&
+\alpha^T(e-DA\tilde{w}-\xi_+-\varepsilon)+\beta^T\!\left(DA\tilde{w}-e-\xi_- -\frac{\varepsilon}{\tau}\right).
\end{aligned}
\end{equation}

According to the KKT conditions, setting the partial derivatives of the Lagrangian function in \ref{eq:412} with respect to \(\tilde{w}\), \(\xi_+\), and \(\xi_-\) to zero yields
\begin{equation}
\label{eq:413}
\begin{cases}
\dfrac{\partial L}{\partial \tilde{w}}
=\tilde{w}-A^TD\alpha+A^TD\beta=0,\\[6pt]
\dfrac{\partial L}{\partial \xi_+}
=p\Omega\xi_+ +(1-p)\Omega e-\alpha=0,\\[6pt]
\dfrac{\partial L}{\partial \xi_-}
=p\tau\Omega\xi_- +\tau(1-p)\Omega e-\beta=0.
\end{cases}
\end{equation}

The complementary slackness conditions are
\begin{equation}
\label{eq:414}
\begin{cases}
\alpha^T(e-DA\tilde{w}-\xi_+-\varepsilon)=0,\\[6pt]
\beta^T\left(DA\tilde{w}-e-\xi_- -\dfrac{\varepsilon}{\tau}\right)=0.
\end{cases}
\end{equation}

Substituting (\ref{eq:413}) into (\ref{eq:412}), we obtain
\begin{equation}
\label{eq:416}
\begin{aligned}
L(\tilde{w},\xi,\alpha,\beta)
={}&
-\frac{1}{2}(\alpha-\beta)^T DXX^T D(\alpha-\beta)
-\frac{1}{2p}\alpha^T\Omega^{-1}\alpha +\frac{\tau-2}{2p}\beta^T\Omega^{-1}\beta\\
&+\frac{\tau(1-p)(2-\tau)}{p}e^T\beta
+(\alpha-\beta)^T e-\left(\alpha+\frac{\beta}{\tau}\right)^T\varepsilon
+\frac{1-p}{p}\alpha^T e .
\end{aligned}
\end{equation}

Therefore, the dual problem of (\ref{eq:45}) can be written as
\begin{equation}
\label{eq:417}
\begin{aligned}
\min_{\alpha,\beta}\quad
&
\frac{1}{2}(\alpha-\beta)^T DXX^T D(\alpha-\beta)
+\frac{1}{2p}\alpha^T\Omega^{-1}\alpha
-\frac{\tau-2}{2p}\beta^T\Omega^{-1}\beta \\
&
-\frac{\tau(1-p)(2-\tau)}{p}\beta^T e
-(\alpha-\beta)^T e
+\left(\alpha+\frac{\beta}{\tau}\right)^T\varepsilon
-\frac{1-p}{p}\alpha^T e \\
\text{s.t.}\quad & \alpha,\beta \ge 0.
\end{aligned}
\end{equation}

Let
\begin{equation}
\label{eq:417a}
u=
\begin{pmatrix}
\alpha\\
\beta
\end{pmatrix},
q=
\begin{pmatrix}
e+\dfrac{1-p}{p}e-\varepsilon\\[6pt]
-e+\dfrac{\tau(1-p)(2-\tau)}{p}e-\dfrac{\varepsilon}{\tau}
\end{pmatrix},
H=
\begin{pmatrix}
DXX^T D+\dfrac{1}{p}\Omega^{-1} & -DXX^T D\\[6pt]
-DXX^T D & DXX^T D-\dfrac{\tau-2}{p}\Omega^{-1}
\end{pmatrix}.
\end{equation}

Then (\ref{eq:417}) can be rewritten as the following quadratic programming problem:
\begin{equation}
\label{eq:418}
\begin{aligned}
\min_{u}\quad & \frac{1}{2}u^T H u-q^T u \\
\text{s.t.}\quad & 0 \le u .
\end{aligned}
\end{equation}

Finally, we employ the clipping dual coordinate descent algorithm (ClipDCD) to solve (\ref{eq:418}). The HQ-ClipDCD based solution framework for nonlinear $\varepsilon$-BAEN-SVM is shown in Algorithm~4.1.
Finally, we employ the clipping dual coordinate descent algorithm (ClipDCD) to solve (\ref{eq:418}). The HQ-ClipDCD based solution framework for nonlinear $\varepsilon$-BAEN-SVM is shown in Algorithm~\ref{alg:hqclipdcd}.

\begin{algorithm}[htbp]
\caption{HQ-ClipDCD for solving nonlinear $\varepsilon$-BAEN-SVM}
\label{alg:hqclipdcd}
\renewcommand{\algorithmicrequire}{\textbf{Input:}}
\renewcommand{\algorithmicensure}{\textbf{Output:}}
\begin{algorithmic}[1]
\REQUIRE Initial values $v^0,\alpha^0,\beta^0,u^0$, $s=0$; training set $D=\{(\tilde{x}_i,y_i)\}_{i=1}^{n}$ with $\tilde{x}_i\in\mathbb{R}^{(p+1)\times 1}$; maximum number of iterations $h_{\max}>0$; and $\epsilon\in\mathbb{R}^{+}$.
\ENSURE The optimal solution of (\ref{eq:418}).
\WHILE{$s\le h_{\max}$}
    \STATE Compute $\Omega=\operatorname{diag}(-C\lambda v^s)$.
    \STATE Solve subproblem \ref{eq:418} by the ClipDCD algorithm and obtain $u^{s+1}=\left(((\alpha^{s+1})^{T},(\beta^{s+1})^{T})\right)^{T}$.
    \IF{$\|u^{s+1}-u^{s}\|_{2}<\epsilon$}
        \STATE Break.
    \ELSE
        \STATE Update $v^s$ by (\ref{eq:48}).
        \STATE Let $s=s+1$.
    \ENDIF
\ENDWHILE
\STATE Return $\alpha^{*}=\alpha^{s}$ and $\beta^{*}=\beta^{s}$.
\end{algorithmic}
\end{algorithm}

By Algorithm~\ref{alg:hqclipdcd}, after obtaining $\alpha_i^{*}$ and $\beta_i^{*}$, the final decision function of nonlinear $\varepsilon$-BAEN-SVM can be written as
\begin{equation}
\label{eq:419}
f(x)=\sum_{i=1}^{n} y_i\,k(\tilde{x},\tilde{x}_i)\bigl(\alpha_i^{*}-\beta_i^{*}\bigr).
\end{equation}

    \section{Properties of $\varepsilon$-BAEN-SVM} \label{s4}
    This section analyzes the main properties of our proposed $\varepsilon$-BAEN-SVM, encompassing sparsity, noise insensitivity, and computational complexity. 
        \subsection{Sparsity}\label{s4.1}
       
       The sparsity of SVM means that, after training is completed, the decision function of the final model depends only on a subset of the training samples, which are called support vectors. Most training samples do not contribute directly to the decision boundary of the model and therefore can be ignored. This property gives support vector machines significant advantages in computational efficiency and storage requirements. In the original formulations of EN-SVM and BAEN-SVM, however, most training samples contribute directly to the decision function. Therefore, we introduce an $\varepsilon$-insensitive band to improve $L_{aen}$, and accordingly propose the $\varepsilon$-BAEN-SVM. This makes $\varepsilon$-BAEN-SVM sparser than BAEN-SVM. Next, we mainly prove the sparsity of $\varepsilon$-BAEN-SVM.

For a sample $x_i$, according to the complementary slackness conditions of the dual problem of $\varepsilon$-BAEN-SVM, we have
\begin{equation}
\label{eq:324}
\begin{cases}
\alpha_i\bigl(1-y_i x_i^{T}w-\xi_{+i}-\varepsilon\bigr)=0,\\[4pt]
\beta_i\bigl(y_i x_i^{T}w-1-\xi_{-i}-\dfrac{\varepsilon}{\tau}\bigr)=0.
\end{cases}
\end{equation}

Here, $\alpha_i,\beta_i\geq 0$ are Lagrange multipliers, $\xi_{+i}$ and $\xi_{-i}$ are the slack variables corresponding to sample $x_i$, and $\varepsilon>0$ and $\tau\in(0,1)$ are tuning parameters.

When $0<1-y_i x_i^{T}w<\varepsilon$, we have $\xi_{+i}=\xi_{-i}=0$. Then, it follows that
\(
-\varepsilon < 1-y_i x_i^{T}w-\xi_{+i}-\varepsilon < 0.
\)
Furthermore, from the complementary slackness conditions in (\ref{eq:324}), we obtain $\alpha_i=0$. Similarly, we have
\(
-\varepsilon-\frac{\varepsilon}{\tau}
< y_i x_i^{T}w-1-\xi_{-i}-\frac{\varepsilon}{\tau}
< -\frac{\varepsilon}{\tau},
\)
and from the complementary slackness conditions in (\ref{eq:324}), we obtain $\beta_i=0$.

When $-\dfrac{\varepsilon}{\tau}<1-y_i x_i^{T}w<0$, we also have $\xi_{+i}=\xi_{-i}=0$. Then, it follows that
\(
-\varepsilon-\frac{\varepsilon}{\tau}
< 1-y_i x_i^{T}w-\xi_{+i}-\varepsilon
< -\varepsilon.
\)
Furthermore, from the complementary slackness conditions in (\ref{eq:324}), we obtain $\alpha_i=0$. Similarly, we have
\(
-\frac{\varepsilon}{\tau}
< y_i x_i^{T}w-1-\xi_{-i}-\frac{\varepsilon}{\tau}
< 0,
\)
and from the complementary slackness conditions in (\ref{eq:324}), we obtain $\beta_i=0$.

In summary, when $1-y_i x_i^{T}w$ lies in the interval $\left(-\dfrac{\varepsilon}{\tau},\varepsilon\right)$, we have $\alpha_i=\beta_i=0$. Therefore, it can be concluded that $\varepsilon$-BAEN-SVM possesses sparsity.

In particular, when $\varepsilon=0$, $\varepsilon$-BAEN-SVM degenerates into BAEN-SVM. In this case, only when $1-y_i x_i^{T}w=0$ does $\alpha_i=\beta_i=0$ hold. This indicates that BAEN-SVM does not possess sparsity.

        \subsection{Noise Insensitivity}\label{s4.3}
From the construction of the loss function $L_{baen}^{\varepsilon}$, it can be seen that it not only possesses boundedness, which ensures robustness to label noise (outliers), but also inherits the insensitivity of $L_{aen}$ to feature noise. Therefore, in this section, the noise insensitivity of $\varepsilon$-BAEN-SVM is investigated from two aspects, namely label noise and feature noise.

 \subsubsection{Robust to Label Noise}

For robustness to label noise, we prove this property by showing that the influence function is bounded. This concept was first introduced by Hampel\citep{Hampel1974IFcurve}. The influence function measures the stability of an estimator under infinitesimal contamination. A robust estimator should have a bounded influence function\citep{Wang2013ESL}. Before presenting the main results, we first make some reasonable assumptions on the distribution of the training data.

Let the probability distribution of the sample point $(x_0^T,y_0)^T$ be denoted by $p_0$. Let $(x^T,y)^T\in\mathbb{R}^{p+1}$ be drawn from the probability distribution $F$. The contaminated distribution of $F$ and $p_0$ is defined as $F_{\theta}=(1-\theta)F+\theta p_0$, where $\theta\in(0,1)$ is a mixing proportion. For a given parameter, let the solution obtained under the contaminated distribution $F_{\theta}$ be denoted by $w_{\theta}^{*}$, and let the solution obtained under the distribution $F$ be denoted by $w_{0}^{*}$, where
\begin{equation}
\label{eq:325}
\begin{cases}
w_0^*=\arg\min\limits_{w}\left[\dfrac{1}{2}\|w\|^2+nC\displaystyle\int L_{baen}^{\varepsilon}\,dF\right],\\[8pt]
w_{\theta}^*=\arg\min\limits_{w}\left[\dfrac{1}{2}\|w\|^2+nC\displaystyle\int L_{baen}^{\varepsilon}\,dF_{\theta}\right].
\end{cases}
\end{equation}

The influence function of the sample point $(x_0^T,y_0)^T$ is defined as
\begin{equation}
\label{eq:326}
\mathrm{IF}(x_0,y_0;w_0^*)=\lim_{\theta\to 0^+}\frac{w_{\theta}^*-w_0^*}{\theta}.
\end{equation}

Before presenting the results, we make the following common assumptions on the distribution of the training data.

\textbf{Assumption 3:} The second moment of the random variable $x\in X$ exists, that is, $E\|x\|^2<\infty$.

\textbf{Assumption 4:}\(W_0=\left(\frac{1}{nC}I+\int \frac{\partial z(x_0,y_0,w_0^*)}{\partial w_0^*}\frac{\partial z(x_0,y_0,w_0^*)^T}{\partial (w_0^*)^T}\nabla^2L_{baen}(z(x_0,y,w_0^*))\,dF\right)
\)is invertible.

Assumption 3 is quite common in statistics and is easily satisfied when the sample dimension is finite. If $W_0$ is not invertible, then one eigenvalue of
\(\int xx^T\nabla^2L_{baen}(z(x,y,w_0^*))\,dF\)
is exactly equal to $\frac{1}{nC}$, which is a small probability event. Therefore, Assumptions 3 and 4 are easy to satisfy.

\begin{theorem}
For the linear $\varepsilon$-BAEN-SVM with given $\lambda$, $\tau$, and $p$, the influence function at the sample point $(x_0^T,y_0)^T$ is
\begin{equation}
\label{eq:327}
\mathrm{IF}(x_0,y_0;w_0^*)
=
W_0^{-1}
\left(
-\frac{1}{nC}w_0
-\gamma_0
-\nabla L_{baen}^{\varepsilon}(z(x_0,y_0,w_0^*))\,
\frac{\partial z(x_0,y_0,w_0^*)}{\partial w_0^*}
\right),
\end{equation}
where
\(
W_0=
\left(
\frac{1}{nC}I+\int xx^T\nabla^2L_{baen}^{\varepsilon}(z(x_0,y,w_0^*))\,dF
\right),
z=1-yx^Tw_{\theta}^{*},
\)

\[
\gamma_1=\left.\int \frac{\partial}{\partial\theta}
\left(
\zeta_1\cdot\frac{\partial z(x,y,w_{\theta}^{*})}{\partial w_{\theta}^{*}}
\right)dF\right|_{\theta=0},
\gamma_2=\left.\int \frac{\partial}{\partial\theta}
\left(
\zeta_2\cdot\frac{\partial z(x,y,w_{\theta}^{*})}{\partial w_{\theta}^{*}}
\right)dF\right|_{\theta=0}.
\]
where $\zeta_1(\theta,x,y)\in\left[0,\dfrac{\eta(1-p)}{\lambda}\right]$ and $\zeta_2(\theta,x,y)\in\left[-\dfrac{\eta\tau(1-p)}{\lambda},0\right]$, and the influence function $\mathrm{IF}(x_0,y_0;w_0^*)$ is bounded.
\end{theorem}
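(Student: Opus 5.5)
The plan is to use the standard M-estimator technique: write the first-order optimality condition for $w_\theta^*$ under the contaminated distribution $F_\theta$, differentiate it with respect to $\theta$ at $\theta=0$, and solve for $\partial w_\theta^*/\partial\theta|_{\theta=0}$.

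\textbf{Step 1: optimality condition.} Since $L_{baen}^{\varepsilon}$ is not twice differentiable at $z=\varepsilon$ and $z=-\varepsilon/\tau$, I will handle the kinks through subgradients. At $z=\varepsilon^+$ the one-sided derivative of $L_{baen}^{\varepsilon}$ is $\eta(1-p)/\lambda$. At $z=(-\varepsilon/\tau)^-$ it is $-\eta\tau(1-p)/\lambda$. On the flat band both one-sided derivatives are $0$. The subdifferential at the two junction points is therefore contained in $[0,\eta(1-p)/\lambda]$ and $[-\eta\tau(1-p)/\lambda,0]$ respectively. These give the selectors $\zeta_1(\theta,x,y)$ and $\zeta_2(\theta,x,y)$ in the statement. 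The stationarity condition for \eqref{eq:325} with $F_\theta$ then reads
\[
\frac{1}{nC}w_\theta^*+(1-\theta)\int\Big(\nabla L_{baen}^{\varepsilon}(z)\,\partial_w z+\zeta_1\partial_w z\,\mathbf 1_{\{z=\varepsilon\}}+\zeta_2\partial_w z\,\mathbf 1_{\{z=-\varepsilon/\tau\}}\Big)dF+\theta\,\nabla L_{baen}^{\varepsilon}(z_0)\,\partial_w z_0=0,
\]
where $z_0=z(x_0,y_0,w_\theta^*)$.

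\textbf{Step 2: differentiate in $\theta$ at $\theta=0$.}
\begin{itemize}
\item Differentiating $w_\theta^*/(nC)$ gives $\frac{1}{nC}\,\mathrm{IF}$.
\item In the smooth part, applying the chain rule to $\nabla L_{baen}^{\varepsilon}(z(x,y,w_\theta^*))\,\partial_w z$ produces $\int xx^T\nabla^2L_{baen}^{\varepsilon}\,dF\cdot\mathrm{IF}$, using $\partial_w z=-yx$ and $y^2=1$.
\item The $-\theta$ factor in $(1-\theta)$ contributes $-\int\nabla L\,\partial_w z\,dF$. By the $\theta=0$ optimality condition this equals $\frac{1}{nC}w_0$ plus the kink terms; after regrouping it gives the $-\frac{1}{nC}w_0$ term in \eqref{eq:327}.
\item Differentiating the kink terms yields $\gamma_0:=\gamma_1+\gamma_2$, as defined in the statement.
\item The contamination term contributes $\nabla L_{baen}^{\varepsilon}(z_0)\,\partial_w z_0$ evaluated at $w_0^*$.
\end{itemize}
Collecting terms gives $W_0\,\mathrm{IF}=-\frac{1}{nC}w_0-\gamma_0-\nabla L_{baen}^{\varepsilon}(z_0)\,\partial z_0/\partial w_0^*$. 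Assumption 4 then lets me invert $W_0$. Interchanging differentiation and integration is justified by dominated convergence: $\nabla L$ and $\nabla^2L$ are bounded, and Assumption 3 supplies the integrable envelope $\|x\|^2$.

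\textbf{Step 3: boundedness.} I treat each piece of \eqref{eq:327} separately.
\begin{itemize}
\item $W_0^{-1}$ is a fixed matrix.
\item $w_0$ is fixed.
\item $\gamma_0$ is finite, since $\zeta_1,\zeta_2$ lie in bounded intervals and Assumption 3 holds.
\item The only term depending on $(x_0,y_0)$ is $g(x_0)=-y_0x_0\,\nabla L_{baen}^{\varepsilon}(z_0)$.
\end{itemize}
For $g$, write $h=L_{aen}^{\varepsilon}$, so that $\nabla L_{baen}^{\varepsilon}=\frac{1}{\lambda}\eta h'(z)/(1+\eta h(z))^2$. On the band $g$ vanishes. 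For $z>\varepsilon$, $h$ is quadratic in $z$ and $h'$ is linear, so $|\nabla L|=O(|z|^{-3})$. The same holds for $z<-\varepsilon/\tau$. Since $|z_0|\ge |y_0x_0^Tw_0|-1$, the product $\|x_0\|\,|\nabla L|$ stays bounded whenever $x_0^Tw_0$ grows with $\|x_0\|$.

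The main obstacle is the directions in which $\|x_0\|\to\infty$ while $x_0^Tw_0$ stays bounded. There $z_0$ stays bounded, so $\nabla L$ does not decay. My first attempt is to follow the paper's convention and bound $\sup_z|\nabla L_{baen}^{\varepsilon}(z)|$ by an explicit constant, for example $\eta(1-p)/\lambda$ attained near the band edge. I would then state boundedness on any bounded feature region, or interpret boundedness in the sense of a bounded contribution of the loss gradient. The decay argument above covers label-type outliers with large $|z_0|$.
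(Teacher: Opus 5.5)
Your derivation of the formula is the paper's own argument. You write the stationarity condition under $F_\theta$, use subgradient selectors $\zeta_1,\zeta_2$ at the two band edges to produce $\gamma_1,\gamma_2$, and differentiate at $\theta=0$. The $\theta=0$ stationarity condition then turns $-\int\nabla L_{baen}^{\varepsilon}\,\partial_w z\,dF$ into $\frac{1}{nC}w_0$, and Assumption 4 lets you invert $W_0$. Both you and the paper treat the kink terms formally, and there are two caveats you share with it:
\begin{itemize}
\item Dominated convergence does not justify differentiating across the jumps of $\nabla L_{baen}^{\varepsilon}$ at $z=\varepsilon$ and $z=-\varepsilon/\tau$.
\item Bounded selectors do not give bounded $\theta$-derivatives, so ``$\gamma_0$ is finite'' needs more than you say.
\end{itemize}

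Where you stop short is the boundedness claim: you fall back on bounded feature regions. The obstruction you found is not a weakness of your argument. It is a counterexample to the statement, and you should present it that way rather than reinterpret ``bounded''. Take $w_0\neq 0$, feature dimension at least $2$, and a unit vector $v\perp w_0$. Set $x_0=s\,w_0/\|w_0\|+tv$, with $s$ and $y_0$ fixed so that $z_0=1-y_0s\|w_0\|>\varepsilon$ (e.g.\ $s=0$ when $\varepsilon<1$, as with $\varepsilon=0.1$ in the experiments). Then:
\begin{itemize}
\item $z_0$ does not depend on $t$, and $c:=\nabla L_{baen}^{\varepsilon}(z_0)>0$ is a fixed constant.
\item The contamination term equals $c\,y_0x_0$, with $\|x_0\|\ge t$.
\item $W_0$ does not depend on $(x_0,y_0)$ and is nonsingular.
\end{itemize}
Hence
\[
\|\mathrm{IF}(x_0,y_0;w_0^*)\|\;\ge\;\frac{ct-\frac{1}{nC}\|w_0\|-\|\gamma_0\|}{\|W_0\|}\;\longrightarrow\;\infty ,
\]
whenever $\gamma_0$ stays bounded, which is exactly what your Step 3 and the paper's proof assume.

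The paper's proof misses this case because it asserts $\|\nabla L_{baen}^{\varepsilon}(z_0)\|\to 0$ as $\|x_0\|\to\infty$, which is false for growth orthogonal to $w_0$. Even when $|z_0|\to\infty$, decay of the gradient alone is not enough. With the $|z|^{-3}$ rate, $\|x_0\|\,|\nabla L_{baen}^{\varepsilon}(z_0)|$ stays bounded only if $|x_0^Tw_0|$ grows at least of order $\|x_0\|^{1/3}$. So your phrase ``whenever $x_0^Tw_0$ grows with $\|x_0\|$'' needs that sharpening. The paper's closing inequality also uses $\lambda_{\min}(W_0)$ where $\lambda_{\min}(W_0)^{-1}$ belongs, and it ends in ``$\le\infty$'', which proves nothing.

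What can actually be proved is a restricted statement:
\begin{itemize}
\item The influence function is bounded uniformly in $y_0$ and in $x_0$ ranging over any bounded set, so label flips at bounded feature values have bounded influence.
\item The contamination term stays bounded, indeed tends to zero when the growth is faster (e.g.\ linear), along directions where $|x_0^Tw_0|$ grows at least like $\|x_0\|^{1/3}$.
\item Boundedness over all feature vectors fails, just as it does for bounded-$\psi$ M-estimators at leverage points.
\end{itemize}

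A minor slip: $\eta(1-p)/\lambda$ is not $\sup_z|\nabla L_{baen}^{\varepsilon}(z)|$ in general. The right derivative of $\nabla L_{baen}^{\varepsilon}$ at $z=\varepsilon$ has the sign of $p-2\eta(1-p)^2$, so the gradient can rise above its band-edge value. The supremum is still finite.
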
\label{th1}

\begin{proof}
According to the KKT conditions, $w_{\theta}^{*}$ satisfies
\begin{equation}
\label{eq:328}
w_{\theta}^{*}
=
-nC\int
\left[
\nabla L_{baen}^{\varepsilon}(z(x,y,w_{\theta}^{*}))
\frac{\partial z(x,y,w_{\theta}^{*})}{\partial w_{\theta}^{*}}
\right]dF_{\theta}.
\end{equation}
where
\begin{equation}
\label{eq:329}
\nabla L_{baen}^{\varepsilon}(z)=
\begin{cases}
\dfrac{\eta\bigl(p(z-\varepsilon)+(1-p)\bigr)}
{\lambda\left(1+\eta\left(\dfrac{p}{2}(z-\varepsilon)^2+(1-p)(z-\varepsilon)\right)\right)^2},
& z>\varepsilon,\\[10pt]
\left[0,\dfrac{\eta(1-p)}{\lambda}\right],
& z=\varepsilon,\\[10pt]
0,
& -\dfrac{\varepsilon}{\tau}<z<\varepsilon,\\[10pt]
\left[-\dfrac{\eta\tau(1-p)}{\lambda},0\right],
& z=-\dfrac{\varepsilon}{\tau},\\[10pt]
\dfrac{\eta\bigl(p(z+\varepsilon/\tau)-(1-p)\bigr)}
{\lambda\left(1+\eta\left(\dfrac{p}{2}(z+\varepsilon/\tau)^2-(1-p)(z+\varepsilon/\tau)\right)\right)^2},
& z<-\dfrac{\varepsilon}{\tau}.
\end{cases}
\end{equation}
Substituting $F_{\theta}=(1-\theta)F+\theta p_0$ into (\ref{eq:328}), we obtain
\begin{equation}
\label{eq:330}
-\frac{1}{nC}w_{\theta}^{*}
=
(1-\theta)\int \nabla L_{baen}^{\varepsilon}(z(x,y,w_{\theta}^{*}))
\frac{\partial z(x,y,w_{\theta}^{*})}{\partial w_{\theta}^{*}}\,dF
+\theta \nabla L_{baen}^{\varepsilon}(z(x_0,y_0,w_{\theta}^{*}))
\frac{\partial z(x_0,y_0,w_{\theta}^{*})}{\partial w_{\theta}^{*}}.
\end{equation}
Taking derivatives of both sides of (\ref{eq:330}) with respect to $\theta$, and letting $\theta\to 0$, yields
\begin{equation}
\label{eq:331}
\begin{aligned}
\left.\frac{1}{nC}\frac{\partial w_{\theta}^{*}}{\partial \theta}\right|_{\theta=0}
={}&
\left.\int \nabla L_{baen}^{\varepsilon}(z(x,y,w_{\theta}^{*}))
\frac{\partial z(x,y,w_{\theta}^{*})}{\partial w_{\theta}^{*}}\,dF\right|_{\theta=0} \\
&-
\left.\int \nabla^{2}L_{baen}^{\varepsilon}(z(x,y,w_{\theta}^{*}))
\frac{\partial z(x,y,w_{\theta}^{*})}{\partial w_{\theta}^{*}}
\frac{\partial z(x,y,w_{\theta}^{*})}{\partial (w_{\theta}^{*})^{T}}\,dF
\frac{\partial w_{\theta}^{*}}{\partial \theta}\right|_{\theta=0} \\
&-
\left.\nabla L_{baen}^{\varepsilon}(z(x_0,y_0,w_{\theta}^{*}))
\frac{\partial z(x_0,y_0,w_{\theta}^{*})}{\partial w_{\theta}^{*}}\right|_{\theta=0}
-\gamma_1-\gamma_2 .
\end{aligned}
\end{equation}

Here,
\[
\left.\gamma_1=\int \frac{\partial}{\partial \theta}
\left(
\zeta_1\cdot \frac{\partial z(x,y,w_{\theta}^{*})}{\partial w_{\theta}^{*}}
\right)dF\right|_{\theta=0},
\zeta_1(\theta,x,y)\in \left[0,\frac{\eta(1-p)}{\lambda}\right],
\]
\[
\left.\gamma_2=\int \frac{\partial}{\partial \theta}
\left(
\zeta_2\cdot \frac{\partial z(x,y,w_{\theta}^{*})}{\partial w_{\theta}^{*}}
\right)dF\right|_{\theta=0},
\zeta_2(\theta,x,y)\in \left[-\frac{\eta\tau(1-p)}{\lambda},0\right],
\]
where $\gamma_1$ and $\gamma_2$ come from the first order derivative of the loss function $L_{baen}^{\varepsilon}$.

Combining (\ref{eq:328}) and (\ref{eq:331}), we obtain
\begin{equation}
\label{eq:332}
\begin{aligned}
\left(
\frac{1}{nC}I
+\int \nabla^{2}L_{baen}^{\varepsilon}(z(x_0,y_0,w_0^{*}))
\frac{\partial z(x_0,y_0,w_0^{*})}{\partial w_0^{*}}
\frac{\partial z(x_0,y_0,w_0^{*})}{\partial (w_0^{*})^{T}}\,dF
\right)
\mathrm{IF}(x_0,y_0;w_0^{*}) \\
=
-\frac{1}{nC}w_0
-\nabla L_{baen}^{\varepsilon}(z(x_0,y_0,w_0^{*}))
\frac{\partial z(x_0,y_0,w_0^{*})}{\partial w_0^{*}}
-\gamma_1-\gamma_2 .
\end{aligned}
\end{equation}

Here, $I$ is the identity matrix, and let
\[
W_0=
\left(
\frac{1}{nC}I
+\int
\frac{\partial z(x_0,y_0,w_0^{*})}{\partial w_0^{*}}
\frac{\partial z(x_0,y_0,w_0^{*})}{\partial (w_0^{*})^{T}}
\nabla^{2}L_{baen}^{\varepsilon}(z(x_0,y_0,w_0^{*}))\,dF
\right).
\]
According to Assumption 3, the influence function can be derived as
\begin{equation}
\label{eq:333}
\mathrm{IF}(x_0,y_0;w_0^{*})
=
W_0^{-1}
\left(
-\frac{1}{nC}w_0
-\gamma_1
-\gamma_2
-\nabla L_{baen}^{\varepsilon}(z(x_0,y_0,w_0^{*}))
\frac{\partial z(x_0,y_0,w_0^{*})}{\partial w_0^{*}}
\right).
\end{equation}

Next, we prove that the influence function is bounded. By Assumption 3 and \ref{eq:333}, we obtain the following inequality:
\begin{equation}
\label{eq:334}
\|\mathrm{IF}(x_0,y_0;w_0^{*})\|
\leq
\lambda_{\min}(W_0)
\left(
\left\|\frac{1}{nC}w_0\right\|
+\|\gamma_1\|
+\|\gamma_2\|
+\left\|
\frac{\partial z(x_0,y_0,w_0^{*})}{\partial w_0^{*}}
\right\|
\left\|
\nabla L_{baen}^{\varepsilon}(z(x_0,y_0,w_0^{*}))
\right\|
\right).
\end{equation}

Here, $\lambda_{\min}(W_0)$ denotes the minimum eigenvalue of the matrix $W_0$. Since $\zeta_1$ and $\zeta_2$ are bounded and continuous with respect to $\theta$ over the interval, their corresponding derivatives $\gamma_1$ and $\gamma_2$ are also bounded with respect to $\theta$. Moreover,
\(
\left\|
\frac{\partial z(x_0,y_0,w_0^{*})}{\partial w_0^{*}}
\right\|
=x_0.
\)
When $\|x_0\|<\infty$,
\(
\left\|
\nabla L_{baen}^{\varepsilon}(z(x_0,y_0,w_0^{*}))
\right\|
\)
is bounded; when $\|x_0\|\to\infty$,
\(
\left\|
\nabla L_{baen}^{\varepsilon}(z)
\right\|
\to 0.
\)
Therefore,
\[
\lambda_{\min}(W_0)
\left(
\left\|\frac{1}{nC}w_0\right\|
+\|\gamma_1\|
+\|\gamma_2\|
+\left\|
\frac{\partial z(x_0,y_0,w_0^{*})}{\partial w_0^{*}}
\right\|
\left\|
\nabla L_{baen}^{\varepsilon}(z(x_0,y_0,w_0^{*}))
\right\|
\right)
\leq \infty.
\]

In summary, the influence function of $\varepsilon$-BAEN-SVM is bounded. Therefore, $\varepsilon$-BAEN-SVM is robust to label noise. The proof of Theorem \ref{th1} is complete.
\end{proof}

	    \subsubsection{Robust to Feature noise}
	
The previous subsection has shown that minimizing the risk of the loss $L_{baen}^{\varepsilon}$ leads to a Bayes classifier that is robust to label noise. In this subsection, the method proposed by \citep{huang2013support} is adopted to prove the robustness of $\varepsilon$-BAEN-SVM to feature noise.

By the KKT conditions, the optimization problem satisfied by the solution of $\varepsilon$-BAEN-SVM  can be expressed as
\begin{equation}
\label{eq:335}
0 \in \frac{w}{C}-\frac{1}{2}\nabla L_{baen}^{\varepsilon}(1-y_i x_i^T w; p,\tau,\lambda,\eta)\,y_i x_i .
\end{equation}

Here, $0$ denotes a zero vector of appropriate dimension whose entries are all zero.

According to the subgradient of $L_{baen}^{\varepsilon}$ in \ref{eq:329}, for a given $w$, the training samples can be divided into the following five classes:
\begin{equation}
\label{eq:336}
\begin{cases}
S_1^w=\{i:1-y_i x_i^T w>\varepsilon\},\\[4pt]
S_2^w=\{i:1-y_i x_i^T w=\varepsilon\},\\[4pt]
S_3^w=\left\{i:-\dfrac{\varepsilon}{\tau}<1-y_i x_i^T w<\varepsilon\right\},\\[8pt]
S_4^w=\left\{i:1-y_i x_i^T w=-\dfrac{\varepsilon}{\tau}\right\},\\[8pt]
S_5^w=\left\{i:1-y_i x_i^T w<-\dfrac{\varepsilon}{\tau}\right\}.
\end{cases}
\end{equation}

Since there exist $\zeta_1(\theta,x,y)\in\left[0,\dfrac{\eta(1-p)}{\lambda}\right]$ and $\zeta_2(\theta,x,y)\in\left[-\dfrac{\eta\tau(1-p)}{\lambda},0\right]$, the optimality condition in \ref{eq:335} can be written as
\begin{align}
 &\frac{w}{C}
-\sum_{i\in S_1^w}
\frac{\eta\tau\bigl(p(z-\varepsilon)+(1-p)\bigr)}
{\lambda\left(1+\eta\left(\dfrac{p}{2}(z-\varepsilon)^2+(1-p)(z-\varepsilon)\right)\right)^2}y_i x_i
-\sum_{i\in S_2^w}\zeta_{1i}y_i x_i
\\
&-\sum_{i\in S_4^w}\zeta_{2i}y_i x_i\nonumber-\sum_{i\in S_5^w}
\frac{\eta\bigl(p(z+\varepsilon/\tau)-(1-p)\bigr)}
{\lambda\left(1+\eta\left(\dfrac{p}{2}(z+\varepsilon/\tau)^2-(1-p)(z+\varepsilon/\tau)\right)\right)^2}y_i x_i
=0.
\end{align}\label{eq:337}

Because the sets $S_2^w$, $S_3^w$, and $S_4^w$ are determined by equalities, it is reasonable to infer that the cardinalities of $S_2^w$, $S_3^w$, and $S_4^w$ are much smaller than those of $S_1^w$ and $S_5^w$. Therefore, the contributions of $S_2^w$, $S_3^w$, and $S_4^w$ to \ref{eq:337} are relatively weak. Hence, $w$ can be approximately determined according to $S_1^w$ and $S_5^w$. Thus, \ref{eq:337} becomes
\begin{align}
\label{eq:338}
&\frac{w}{C}
-\sum_{i\in S_1^w}
\frac{\eta\tau\bigl(p(z-\varepsilon)+(1-p)\bigr)}
{\lambda\left(1+\eta\left(\dfrac{p}{2}(z-\varepsilon)^2+(1-p)(z-\varepsilon)\right)\right)^2}y_i x_i\nonumber\\
&-\sum_{i\in S_5^w}
\frac{\eta\bigl(p(z+\varepsilon/\tau)-(1-p)\bigr)}
{\lambda\left(1+\eta\left(\dfrac{p}{2}(z+\varepsilon/\tau)^2-(1-p)(z+\varepsilon/\tau)\right)\right)^2}y_i x_i
\approx 0.
\end{align}

Since $\lambda>0$ and $\eta>0$, \ref{eq:338} can be rewritten as
\begin{equation}
\label{eq:339}
\frac{w}{C}
+\sum_{i\in S_1^w}\tau\bigl(1-p-p(1-y_i x_i^T w-\varepsilon)\bigr)y_i x_i
-\sum_{i\in S_5^w}\bigl(p(1-y_i x_i^T w+\varepsilon/\tau)+1-p\bigr)y_i x_i
\approx 0.
\end{equation}

By properly choosing the parameters $\varepsilon$ and $\tau$, the sensitivity of the model to feature noise can be controlled. Specifically, when $\tau$ is large, that is, close to $1$, both $S_1^w$ and $S_5^w$ contain a large number of sample points. In this case, the model achieves a better balance between $S_1^w$ and $S_5^w$, and the contributions of samples on both sides of the decision boundary constrain each other, which helps reduce the sensitivity to zero mean feature noise. On the other hand, decreasing $\varepsilon$ also increases the number of samples in $S_1^w$ and $S_5^w$, thereby making the model less affected by zero mean feature noise near the decision boundary. Therefore, it can be concluded that $\varepsilon$-BAEN-SVM is robust to feature noise. In addition, increasing the value of $\varepsilon$ increases the samples corresponding to $S_3^w$, which makes the model sparser. To a certain extent, this indicates that adjusting the size of $\varepsilon$ helps balance the sparsity and robustness of the model.        
         
        \subsection{Complexity Analysis}
        This subsection provides a detailed analysis of the time complexity of the proposed BAEN-SVM method. Our algorithm has a computational advantage over existing algorithms designed for solving non-convex models, primarily owing to its efficient strategy for addressing the associated quadratic optimization subproblem.  

        Specifically, each iteration of \autoref{alg:hqclipdcd} need to solve a quadratic programming (QP) problem. In general, the time complexity of solving such a QP problem is $O((2n)^3)$, where $n$ denotes the number of training samples. However, by employing the clipDCD algorithm \citep{boyd2004convex}, we can reduce the complexity of each coordinate update to $O(2n)$. The clipDCD algorithm's overall time complexity is $O(t(2n))$ if convergence occurs after $t$ iterations. Therefore, we adopt the clipDCD algorithm for the BAEN-SVM subproblem. Let $q$ denote the number of iterations required for convergence for the half-quadratic optimization procedure. Then, the overall time complexity for computing \autoref{alg:hqclipdcd} is $O(qt(2n))$, where $q$ and $t$ refer to the number of HQ and clipDCD iterations, respectively. Consequently, compared to the direct solution method with complexity $O(q(2n)^3)$, implementing the clipDCD-based HQ optimization method significantly reduces computational complexity, especially for large-scale datasets.

    \section{Experiments}
        \subsection{Set up}
        In this section, we present several experiments to evaluate the performance of the proposed $\epsilon$-BAEN-SVM on both artificial and benchmark datasets. For fair assessment and comprehensive comparison, the comparison models include well-known or recently proposed SVMs, such as Pin-SVM \citep{PinSVM6604389}, ALS-SVM \citep{HUANGALS}, EN-SVM \citep{qi2019ENSVM}, BQ-SVM \citep{ZHANG2024BQSVM}, BALS-SVM \citep{zhang2025BALSSVM}, and BAEN-SVM. 
        The algorithms are implemented in R 4.4.2, and the experiments are conducted on a machine equipped with an AMD Ryzen 7 8845H CPU (3.80 GHz) and 32GB of RAM.
            
        Five-fold cross-validation and grid search methods are applied to select the optimal settings for each model.
        The parameter $C_1$ and $C_2$  in EN-SVM have a range of values between $\{2^{-8},2^{-6}, 2^{-4}, \cdots, 2^{4}, 2^{6},2^{8}\}$.
        The parameters $p$ in $\epsilon$-BAEN-SVM, ALS-SVM ,BALS-SVM and BAEN-SVM are selected from  $\{0.3,0.5, 0.7\}$, $\{0.5, 0.7, 0.9, 0.99, 0.999\}$ ,$\{0.3,0.5, 0.7, 0.9, 0.99\}$ and $\{0.3,0.5, 0.7\}$,respectively. Set the parameter $\epsilon$ of the $\epsilon$-BAEN-SVM model to 0.1.
        The parameters $\tau$ in $\epsilon$-BAEN-SVM, BAEN-SVM, BQ-SVM and Pin-SVM are selected from $\{0.1, 0.3, 0.6, 1\}$, $\{0, 0.1, 0.3, 0.6, 1\}$, $\{0, 0.1, 0.3, 0.6, 1\}$ and $\{0.1, 0.3, 0.6, 1\}$, respectively. 
        The parameters $\eta$ of BALS-SVM, BQ-SVM, BAEN-SVM, and $\epsilon$-BAEN-SVM takes on values in $\{2^{-6}, 2^{-4}, \cdots, 2^{4}, 2^{6}\}$.  For grid-searching the SVM regularization parameter $C$, we have $C\in \{2^{i}\}$, where $i\in\{-8,-7,\cdots,8\}$.
        For the nonlinear case, we use a radial basis function (RBF) kernel\begin{equation}K(x_i,x_j)=\exp(-\sigma\|x_i-x_j\|_2^2),\end{equation}with \(\sigma\) chosen from \(\{2^{-4},2^{-3},\cdots,2^3,2^4\}\). 

        The accuracy (ACC) and \(F_1\mathrm{-score~}(F_1)\) are used to evaluate the classification performance of BAEN-SVM.
        Accuracy measures the proportion of samples correctly predicted by the model out of the total samples, which is defined as
             \begin{equation}
             ACC=\frac{TP+TN}{TP+TN+FP+FN},
             \end{equation}
         where $TP$ and $TN$ represent the number of correctly predicted positive and negative samples, respectively, while $FP$ and $FN$ reflect the number of misclassified positive and negative samples.
         
        The $F_1$ score is the reconciled average of precision and recall, which is expressed as
             \begin{equation}
              F_1 = \frac{2 \cdot \text{Precision} \cdot \text{Recall}}{\text{Precision} + \text{Recall}} = \frac{2TP}{2TP + FP + FN}.
            \end{equation}
        
        Precision measures how well a model avoids labeling negative samples as positive. A higher precision means fewer negative samples are misclassified. Recall measures how well a model finds positive samples. A higher recall means fewer positive samples are missed. A larger $F_1$ value signifies greater model robustness. Both $ACC$ and $F_1$ values range from 0 to 1, with higher values indicating superior model performance.
              
        \subsection{Artificial Datasets}
        We create a two-dimensional artificial dataset of 150 samples equally divided between two classes. Positive and negative samples are drawn from normal distributions with \(\mu_+=(3,3)^T\) and \(\mu_{-}=(-3,-3)^{T}\), respectively, and share the covariance matrix \(V=\operatorname{di}\operatorname{ag}(1,1)\). For this experiment, the Bayes classifier is given by \(f_{C}(x)=x_{1}-x_{2}\).
            
        Case 1. We introduce three outliers (label noise) into the negative class to simulate data contamination. \autoref{fig:moni1} illustrates a comparison of the classification boundaries (black solid line) derived from six SVMs with the Bayes optimum boundary (green solid line). The deviation of each model's decision boundary from the Bayes classifier reflects its sensitivity to the introduced label noise.
        
            \begin{figure}[H]
		\centering
		\subfigure[Hinge-SVM]{
			\includegraphics[scale=0.35]{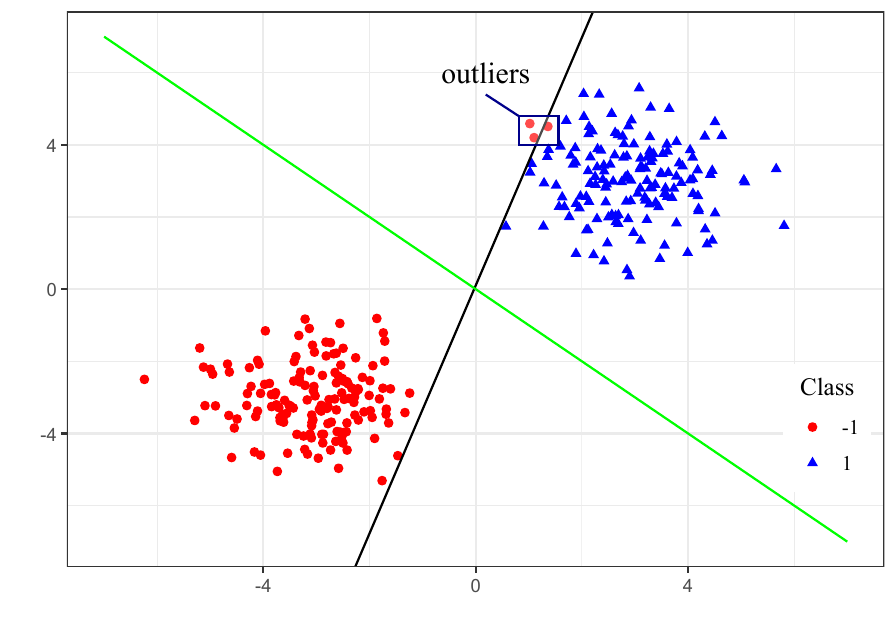}
		}
		\subfigure[Pin-SVM]{
			\includegraphics[scale=0.35]{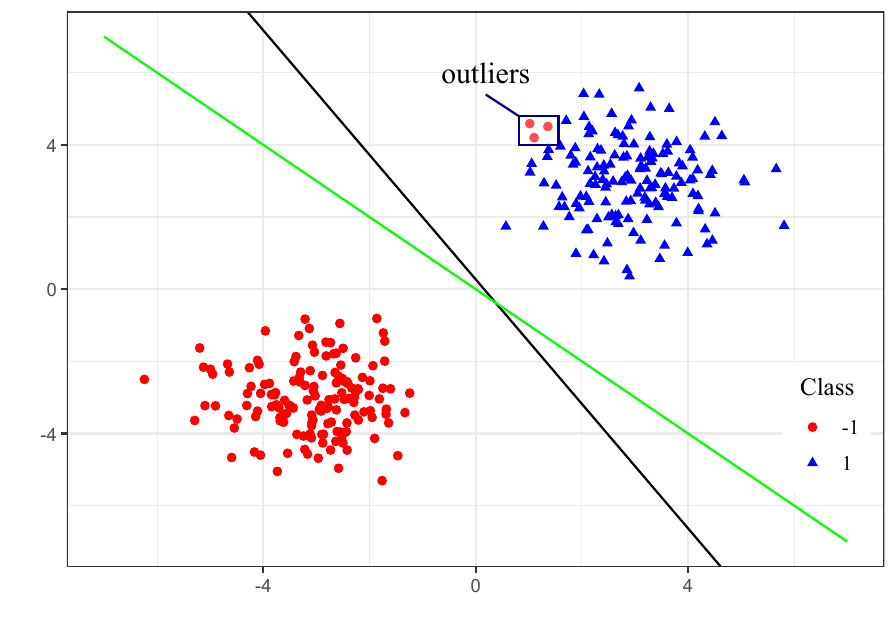}
		}
		\subfigure[LS-SVM]{
			\includegraphics[scale=0.35]{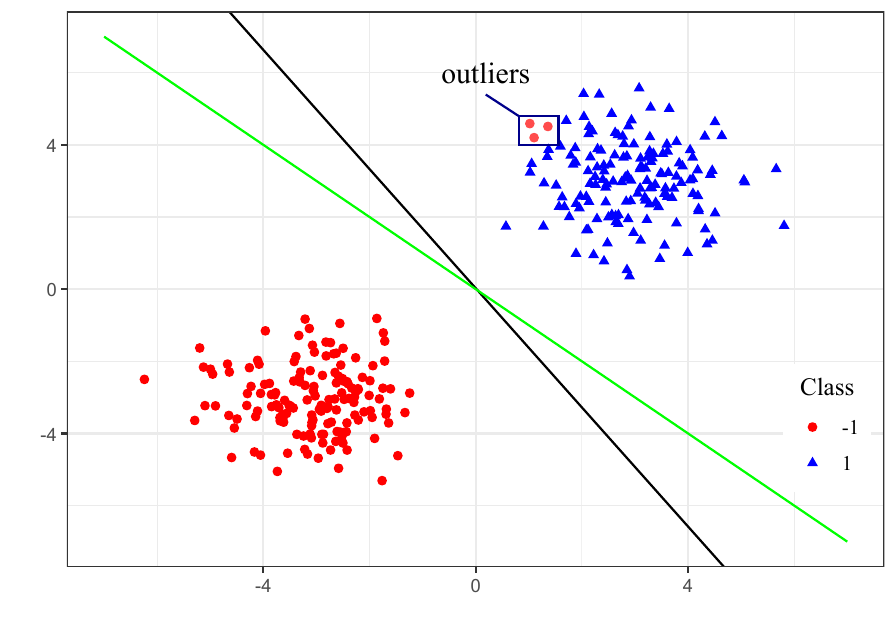}
		}
            \\
		\centering
		\subfigure[ALS-SVM]{
			\includegraphics[scale=0.35]{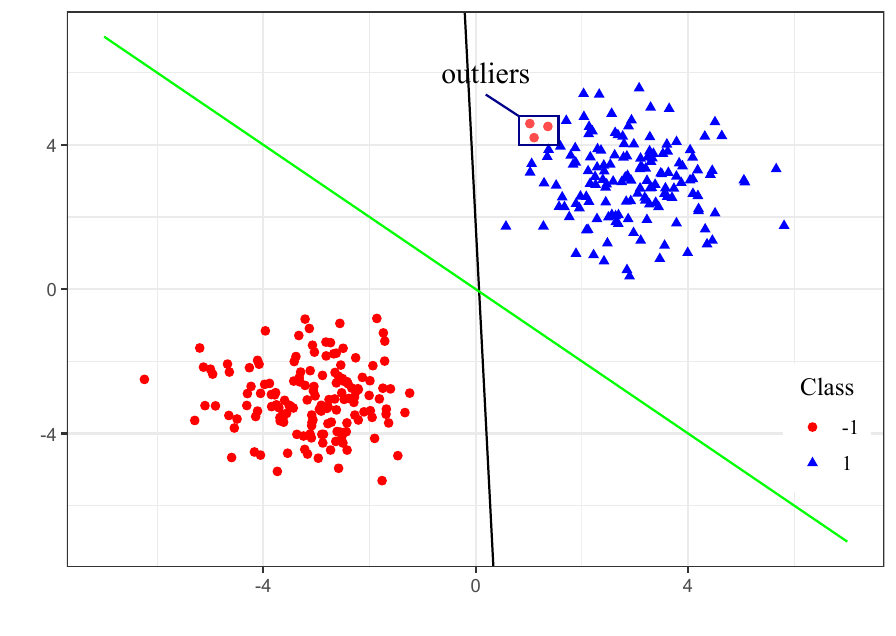}
		}
		\subfigure[EN-SVM]{
			\includegraphics[scale=0.35]{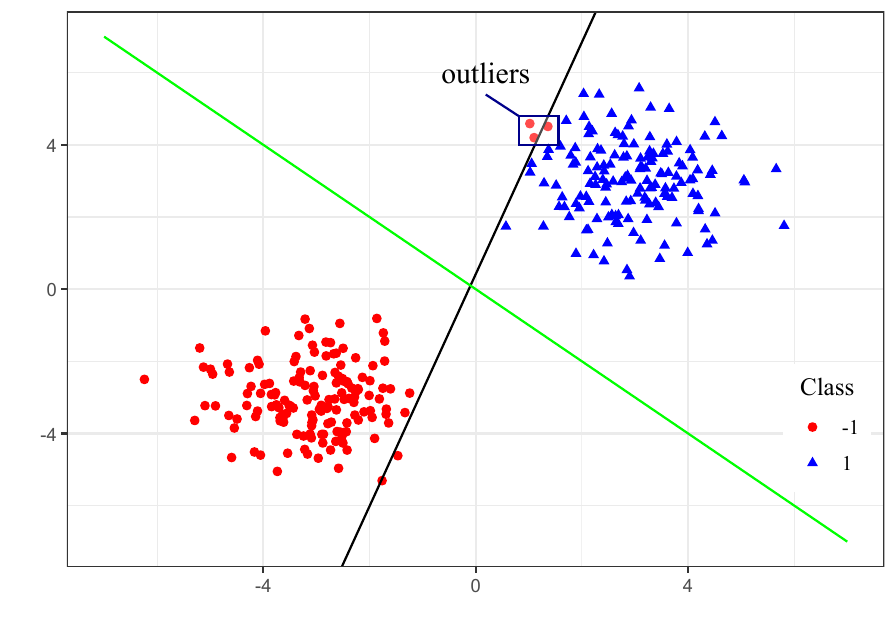}
		}
		\subfigure[$\epsilon$-BAEN-SVM]{
			\includegraphics[scale=0.35]{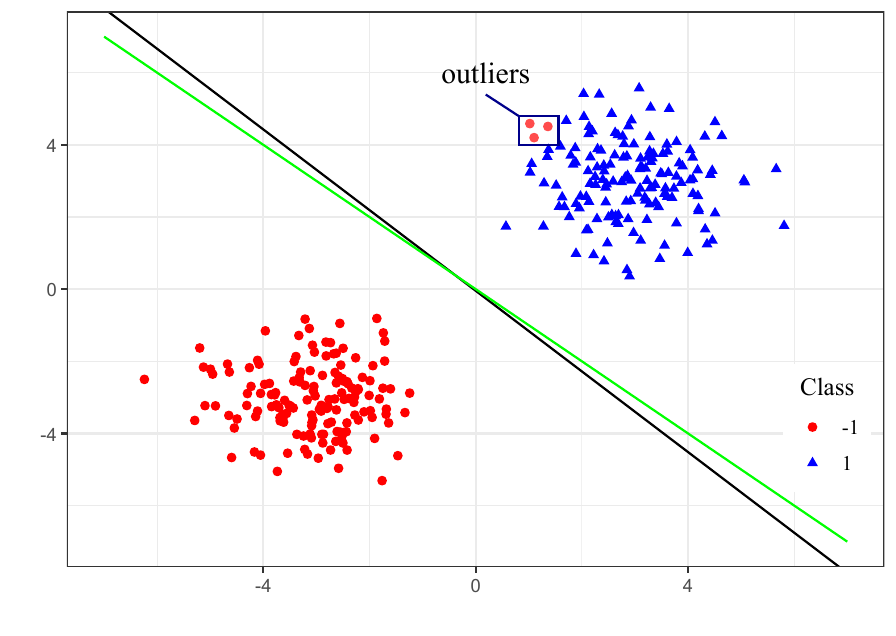}
		}
		\caption{Linear separating hyperplanes(black solid lines of Hing-SVM,Pin-SVM,LS-SVM,ALS-SVM,EN-SVM,BAEN-SVM. The green solid line is the Bayes classifier.}
		\label{fig:moni1}
	\end{figure}

        In \autoref{fig:moni1}, $\epsilon$-BAEN-SVM exhibits the most stable performance in the presence of outliers, closely aligning with the Bayes optimal boundary and outperforming the other methods. LS-SVM and Pin-SVM follow, with their classification decisions slightly deviating from the Bayes classifier due to label noise. In contrast, Hinge-SVM and EN-SVM perform poorly, as their decision boundaries significantly deviate from the Bayes classifier, highlighting their high sensitivity to label noise.
            
        Case 2. In this case, three outliers are introduced into both the positive and negative classes. \autoref{fig:moni2} displays the training samples along with the decision boundaries (black solid lines) generated by six different SVM models. The green solid line is the Bayes classifier.
              \begin{figure}[htbp]
		\centering
		\subfigure[Hinge-SVM]{
			\includegraphics[scale=0.35]{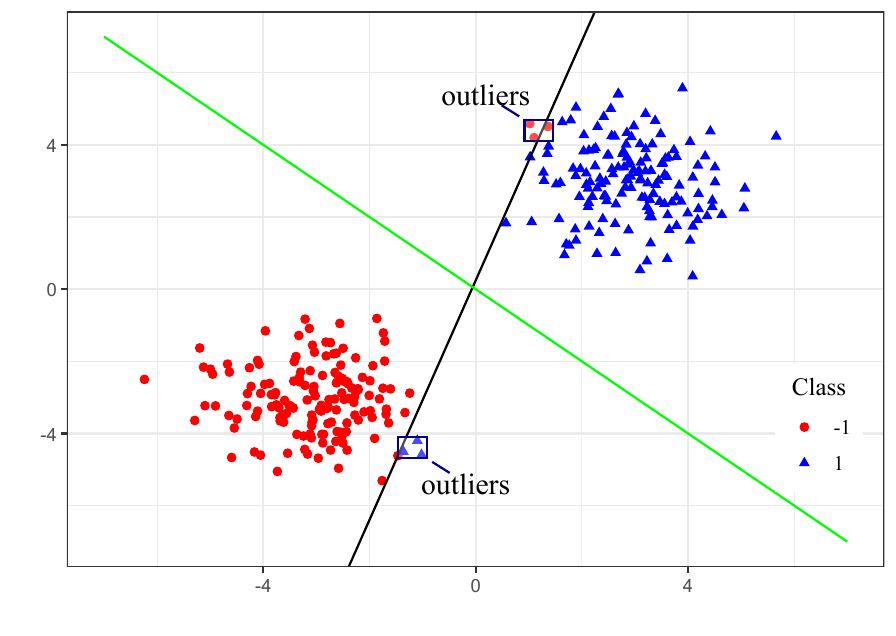}
		}
		\subfigure[Pin-SVM]{
			\includegraphics[scale=0.35]{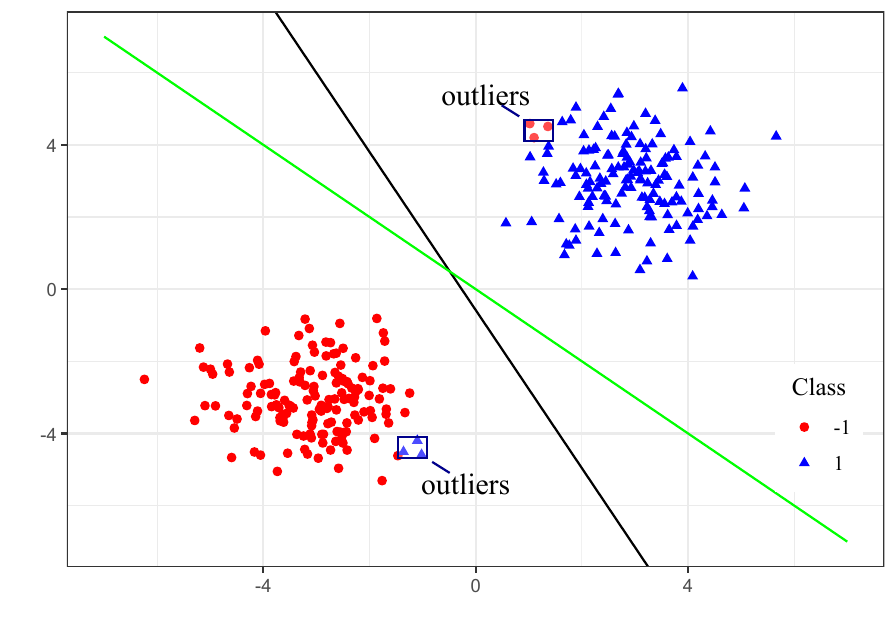}
		}
		\subfigure[LS-SVM]{
			\includegraphics[scale=0.35]{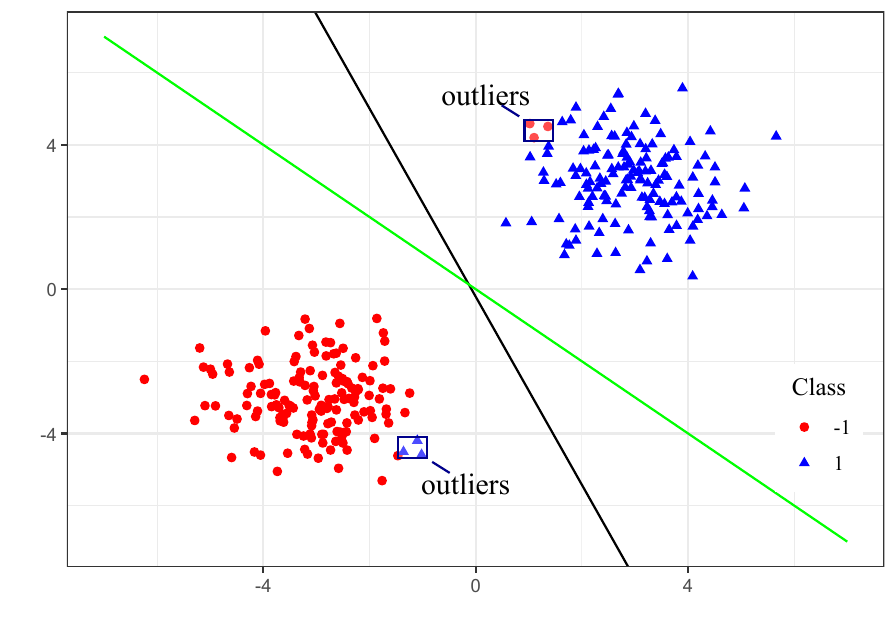}
		}
            \\
		\centering
		\subfigure[ALS-SVM]{
			\includegraphics[scale=0.35]{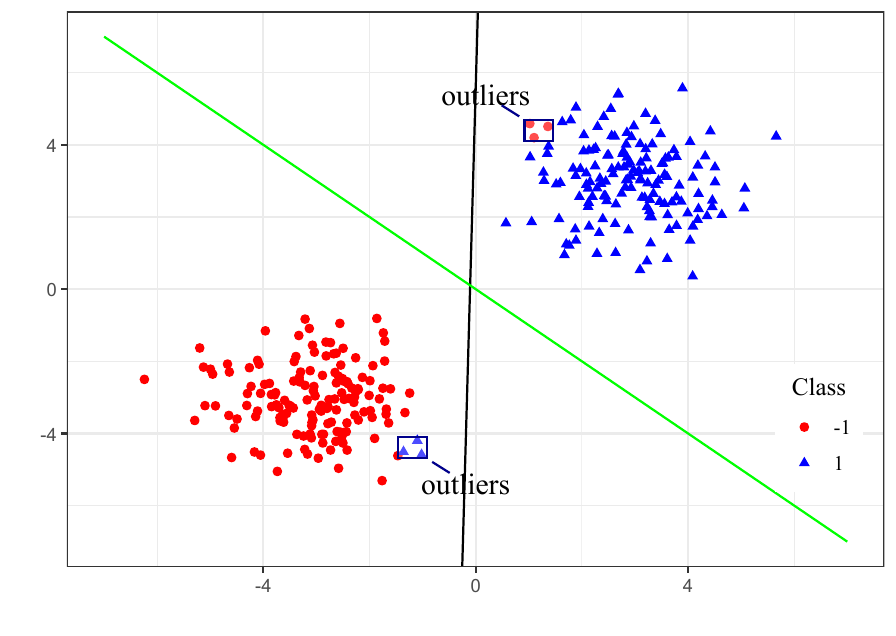}
		}
		\subfigure[EN-SVM]{
			\includegraphics[scale=0.35]{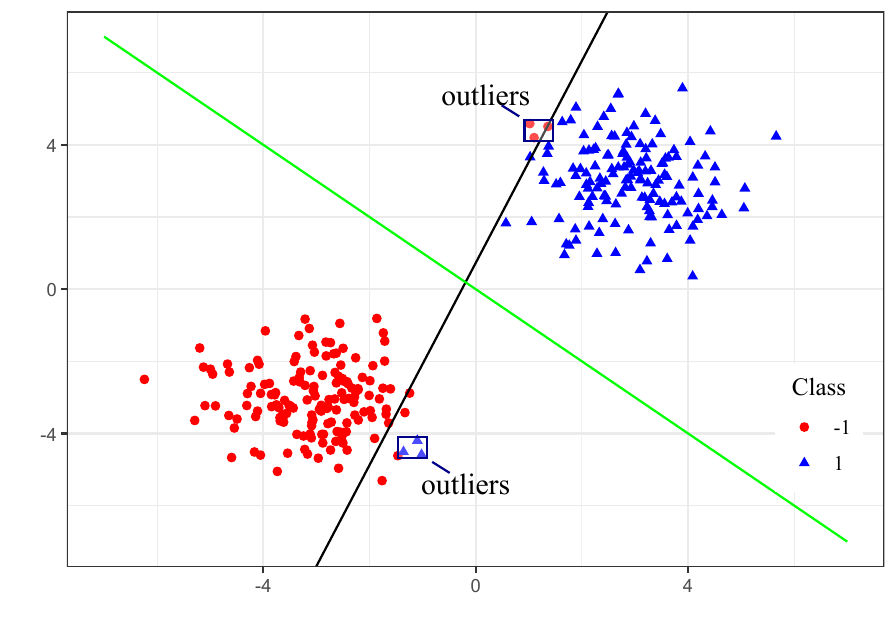}
		}
		\subfigure[$\epsilon$-BAEN-SVM]{
			\includegraphics[scale=0.35]{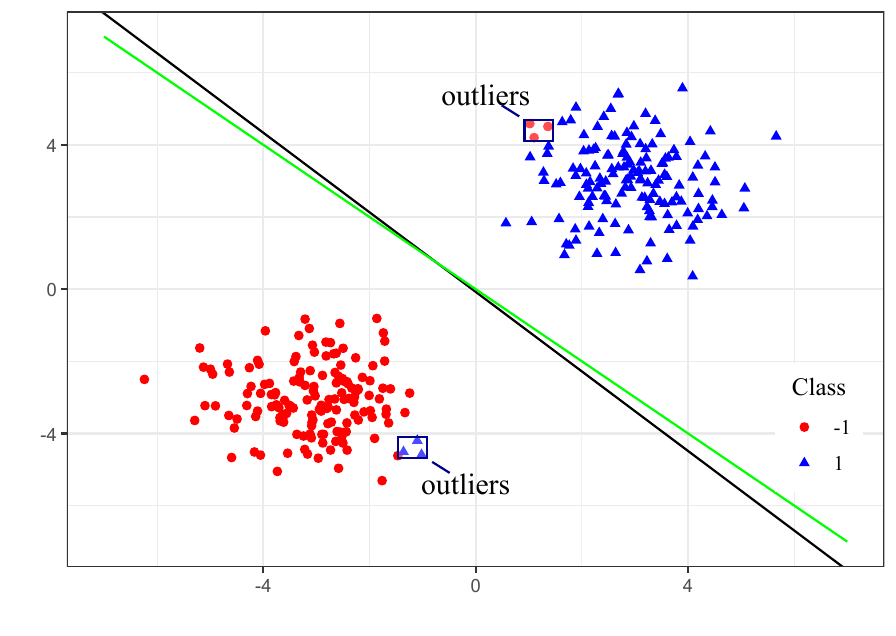}
		}
		\caption{Linear separating hyperplanes(black solid lines of Hing-SVM,Pin-SVM,LS-SVM,ALS-SVM,EN-SVM,$\epsilon$-BAEN-SVM. The green solid line is the Bayes classifier.}
		\label{fig:moni2}
	\end{figure}

        As shown in \autoref{fig:moni2}, $\epsilon$-BAEN-SVM maintains superior classification performance even when outliers are added to both classes. In contrast, EN-SVM and Hinge-SVM are significantly affected by the outliers. Their decision boundaries deviate significantly and even intersect the outlier points, which indicates they appear to be overfitted. While Pin-SVM and LS-SVM exhibit some deviation from the Bayes optimal boundary, their performance still outperforms that of ALS-SVM, Hinge-SVM, and EN-SVM. Overall, $\epsilon$-BAEN-SVM exhibits the strongest robustness among all models, which aligns with its boundness. This result is consistent with the theoretical conclusion in \autoref{th: if baen}, which further validates that BAEN-SVM is highly robust to label noise.
        
        \subsection{Benchmark Datasets}
        We select 15 datasets from the UCI machine learning repository\footnote{\url{https://archive.ics.uci.edu/}} and the homepage of KEEL\footnote{\url{https://sci2s.ugr.es/keel/datasets.php}} to further validate the competitive performance of BAEN-SVM. Detailed descriptions of datasets are provided in \autoref{tab: data des}.
        \begin{table}[htbp]
		\centering
		\footnotesize
		\caption{Description of fifteen benchmark datasets}
		\begin{tabularx}{\hsize}{@{}@{\extracolsep{\fill}}ZZZZ@{}}
			\toprule
			ID & Dataset & Samples & Attributes  \\
			\midrule
			1  & appendicitis   & 106 & 7    \\
			2  & blood          & 748& 4     \\
            3  & coimbra        & 116&9      \\ 
			4  & diabetic       & 1151 & 19  \\
			5  & fertility      & 100 & 9    \\
			6  & haberman       & 306 & 3    \\
            7  &heart failure   & 299 & 12   \\
            8  &monkm           &431  &6     \\
			9 & pima           & 768  &8    \\
			10 & plrx           & 182& 12    \\
			11 & pop failures   & 540 & 20   \\
			12 & sonar          & 208 & 60   \\
            13 &titanic         &2200 &3     \\
            14 &knowledge       &403  &5     \\
            15 &wisconsin       &699  &9     \\
			\bottomrule
	   \end{tabularx}
		 \label{tab: data des}
	\end{table}
        To further assess the robustness to noise, we artificially add 25\% label noise by randomly swapping 25\% labels in all samples. Additionally, feature noise is added by generating zero-mean Gaussian noise for each feature, with the noise variance scaled by the feature's original variance. The noise level is controlled by the ratio $r$, which represents the proportion of the noise variance relative to the feature variance. The results of BAEN-SVM and the baseline models with linear kernel based on five-fold cross-validation are shown in \autoref{tab:res classification linear acc} and \autoref{tab:res classification linear F1}. The results for Gaussian kernel are shown in \autoref{tab:res classification Gaussian acc} and \autoref{tab:res classification Gaussian F1}. 
            
         From \autoref{tab:res classification Gaussian acc}  and \autoref{tab:res classification Gaussian F1}, in linear conditions, the proposed $\varepsilon$-BAEN-SVM achieves higher average accuracy and a higher $F_1$ score than other methods. This advantage becomes even clearer when we add 25\% feature noise or label noise. This result further confirms the noise robustness of $\varepsilon$-BAEN-SVM. In noisy datasets, BAEN-SVM and BQ-SVM perform nearly as well as $\varepsilon$-BAEN-SVM. EN-SVM always outperforms Pin-SVM and ALS-SVM under both no-noise and 25\% feature noise conditions. This shows the strength of the elastic network loss function. However, adding 25\% label noise greatly reduces the performance of EN-SVM. The reason is that its loss function is not robust. For example, on the diabetic dataset, the accuracy of EN-SVM drops from 0.735 to 0.665. In contrast, the designed loss function $L_{\text{baen}}^{\varepsilon}$ has boundedness and asymmetry. These properties help $\varepsilon$-BAEN-SVM remain stable against outliers and resampling. As a result, $\varepsilon$-BAEN-SVM maintains high average accuracy and high $F_1$ scores under both label noise and feature noise. This confirms that the model is effective and robust on linearly separable data.

 \begin{table}[htbp]
		\centering
		\fontsize{10pt}{5pt}\selectfont
		\caption{Comparison of the mean accuracy (ACC$\pm$sd) of seven SVMs with linear kernel in benchmark datasets.}
		\label{tab:res classification linear acc}
		\begin{tabularx}{\hsize}{@{}@{\extracolsep{\fill}}ZZZZZZZZ@{}}
			\toprule 
			\multicolumn{8}{c}{(a) 0\% noise} \\ 
			\midrule 
			dataset& Pin-SVM & ALS-SVM & EN-SVM & BQ-SVM & BALS-SVM & BAEN-SVM & $\epsilon$-BAEN-SVM  \\
			\midrule 
australian&0.857$\pm$0.039&0.878$\pm$0.017&0.868$\pm$0.021&0.875$\pm$0.013&\textbf{0.881$\pm$0.015}&\textbf{0.881$\pm$0.015}&0.868$\pm$0.021\\
blood&0.771$\pm$0.027&0.777$\pm$0.028&0.777$\pm$0.030&0.774$\pm$0.032&0.777$\pm$0.028&0.778$\pm$0.027&\textbf{0.789$\pm$0.039}\\
coimbra&0.732$\pm$0.079&0.733$\pm$0.078&0.749$\pm$0.059&0.750$\pm$0.073&0.732$\pm$0.080&\textbf{0.784$\pm$0.083}&0.767$\pm$0.037\\
diabetic&0.706$\pm$0.019&0.729$\pm$0.018&0.735$\pm$0.030&\textbf{0.739$\pm$0.024}&0.729$\pm$0.021&0.732$\pm$0.016&0.724$\pm$0.017\\
fertility&0.880$\pm$0.027&0.870$\pm$0.027&0.880$\pm$0.027&0.880$\pm$0.027&0.880$\pm$0.027&\textbf{0.890$\pm$0.042}&\textbf{0.890$\pm$0.042}\\
haberman&0.745$\pm$0.071&0.751$\pm$0.072&0.755$\pm$0.055&0.752$\pm$0.061&0.751$\pm$0.070&0.751$\pm$0.047&\textbf{0.768$\pm$0.063}\\
heart&0.839$\pm$0.045&0.839$\pm$0.050&0.839$\pm$0.054&\textbf{0.843$\pm$0.034}&\textbf{0.843$\pm$0.054}&0.839$\pm$0.037&0.839$\pm$0.042\\
monk&0.840$\pm$0.030&0.803$\pm$0.011&0.863$\pm$0.050&\textbf{0.889$\pm$0.037}&0.856$\pm$0.042&0.854$\pm$0.050&0.858$\pm$0.072\\
pima&0.770$\pm$0.031&0.763$\pm$0.044&0.773$\pm$0.046&0.777$\pm$0.045&0.768$\pm$0.042&0.780$\pm$0.030&\textbf{0.786$\pm$0.040}\\
plrx&0.714$\pm$0.126&0.714$\pm$0.126&0.714$\pm$0.126&0.725$\pm$0.133&0.720$\pm$0.124&0.726$\pm$0.130&\textbf{0.742$\pm$0.095}\\
pop failures&0.944$\pm$0.020&\textbf{0.965$\pm$0.010}&\textbf{0.965$\pm$0.010}&0.961$\pm$0.018&0.937$\pm$0.029&0.952$\pm$0.021&0.944$\pm$0.022\\
sonar&0.750$\pm$0.058&0.770$\pm$0.063&0.779$\pm$0.060&\textbf{0.789$\pm$0.082}&0.779$\pm$0.081&\textbf{0.789$\pm$0.086}&0.779$\pm$0.082\\
titanic&0.776$\pm$0.016&0.778$\pm$0.018&0.777$\pm$0.018&0.776$\pm$0.015&0.778$\pm$0.017&0.778$\pm$0.017&\textbf{0.780$\pm$0.017}\\
knowledge&0.968$\pm$0.038&0.983$\pm$0.014&\textbf{0.990$\pm$0.010}&0.988$\pm$0.009&0.963$\pm$0.043&0.980$\pm$0.019&0.988$\pm$0.009\\
wisconsin&0.966$\pm$0.009&0.970$\pm$0.009&0.970$\pm$0.011&0.970$\pm$0.006&0.970$\pm$0.006&0.970$\pm$0.006&\textbf{0.973$\pm$0.009}\\

			\midrule 
			\multicolumn{8}{c}{(b) $25\%$ label noise} \\ 
			\midrule 
			dataset& Pin-SVM & ALS-SVM & EN-SVM & BQ-SVM & BALS-SVM & BAEN-SVM  & $\epsilon$-BAEN-SVM  \\
			\midrule 
australian&0.862$\pm$0.036&0.864$\pm$0.019&0.857$\pm$0.027&\textbf{0.875$\pm$0.017}&0.871$\pm$0.020&0.874$\pm$0.015&0.861$\pm$0.035\\
blood&0.773$\pm$0.030&0.775$\pm$0.034&0.775$\pm$0.034&0.774$\pm$0.028&0.778$\pm$0.039&0.777$\pm$0.029&\textbf{0.786$\pm$0.042}\\
coimbra&0.716$\pm$0.067&0.733$\pm$0.120&0.723$\pm$0.102&0.725$\pm$0.115&0.715$\pm$0.060&0.725$\pm$0.093&\textbf{0.775$\pm$0.073}\\
diabetic&0.644$\pm$0.042&0.661$\pm$0.034&0.665$\pm$0.027&0.674$\pm$0.021&0.670$\pm$0.024&\textbf{0.684$\pm$0.022}&0.663$\pm$0.009\\
fertility&0.810$\pm$0.102&0.810$\pm$0.102&0.830$\pm$0.125&\textbf{0.890$\pm$0.042}&0.880$\pm$0.045&\textbf{0.890$\pm$0.042}&\textbf{0.890$\pm$0.042}\\
haberman&0.745$\pm$0.073&0.755$\pm$0.061&0.755$\pm$0.061&0.751$\pm$0.075&0.761$\pm$0.065&0.755$\pm$0.072&\textbf{0.765$\pm$0.051}\\
heart&0.803$\pm$0.046&0.796$\pm$0.048&0.809$\pm$0.040&0.813$\pm$0.041&0.796$\pm$0.057&\textbf{0.819$\pm$0.056}&0.809$\pm$0.056\\
monk&0.826$\pm$0.018&0.807$\pm$0.038&0.805$\pm$0.038&0.835$\pm$0.025&\textbf{0.851$\pm$0.043}&0.826$\pm$0.018&0.842$\pm$0.039\\
pima&0.775$\pm$0.031&0.767$\pm$0.026&0.775$\pm$0.024&\textbf{0.780$\pm$0.033}&0.776$\pm$0.029&0.779$\pm$0.027&\textbf{0.780$\pm$0.042}\\
plrx&0.714$\pm$0.126&0.687$\pm$0.118&0.720$\pm$0.142&0.726$\pm$0.106&0.725$\pm$0.133&\textbf{0.736$\pm$0.099}&0.726$\pm$0.130\\
pop failures&0.915$\pm$0.022&0.909$\pm$0.029&0.919$\pm$0.025&0.915$\pm$0.022&0.919$\pm$0.023&0.919$\pm$0.023&\textbf{0.922$\pm$0.011}\\
sonar&0.736$\pm$0.075&0.745$\pm$0.076&0.760$\pm$0.044&\textbf{0.764$\pm$0.093}&0.760$\pm$0.079&\textbf{0.764$\pm$0.064}&0.755$\pm$0.097\\
titanic&0.776$\pm$0.016&0.781$\pm$0.021&0.782$\pm$0.019&0.780$\pm$0.014&0.781$\pm$0.019&0.781$\pm$0.020&\textbf{0.783$\pm$0.019}\\
knowledge&0.933$\pm$0.040&0.943$\pm$0.046&0.940$\pm$0.022&0.950$\pm$0.047&0.935$\pm$0.041&0.945$\pm$0.043&\textbf{0.963$\pm$0.029}\\
wisconsin&0.926$\pm$0.006&0.950$\pm$0.010&0.960$\pm$0.011&\textbf{0.970$\pm$0.006}&\textbf{0.970$\pm$0.006}&0.969$\pm$0.006&\textbf{0.970$\pm$0.011}\\

			\midrule
			\multicolumn{8}{c}{(c) $25\%$ feature noise} \\ 
			\midrule 
			dataset& Pin-SVM & ALS-SVM & EN-SVM & BQ-SVM & BALS-SVM & BAEN-SVM & $\epsilon$-BAEN-SVM   \\
			\midrule 
australian&0.867$\pm$0.043&0.878$\pm$0.016&0.878$\pm$0.022&0.874$\pm$0.014&\textbf{0.880$\pm$0.025}&0.875$\pm$0.024&0.877$\pm$0.021\\
blood&0.767$\pm$0.024&0.777$\pm$0.027&0.779$\pm$0.039&0.771$\pm$0.030&0.777$\pm$0.027&0.775$\pm$0.028&\textbf{0.781$\pm$0.043}\\
coimbra&0.741$\pm$0.083&0.750$\pm$0.089&0.750$\pm$0.058&\textbf{0.758$\pm$0.101}&\textbf{0.758$\pm$0.068}&0.749$\pm$0.080&0.750$\pm$0.085\\
diabetic&0.648$\pm$0.047&0.653$\pm$0.044&0.655$\pm$0.053&0.652$\pm$0.055&0.650$\pm$0.044&0.649$\pm$0.045&\textbf{0.669$\pm$0.034}\\
fertility&0.880$\pm$0.027&0.880$\pm$0.027&0.880$\pm$0.027&\textbf{0.890$\pm$0.022}&0.880$\pm$0.027&\textbf{0.890$\pm$0.042}&\textbf{0.890$\pm$0.022}\\
haberman&0.738$\pm$0.074&0.745$\pm$0.084&0.748$\pm$0.073&0.751$\pm$0.079&0.768$\pm$0.063&0.751$\pm$0.063&\textbf{0.774$\pm$0.057}\\
heart&0.836$\pm$0.053&0.839$\pm$0.043&0.839$\pm$0.037&0.836$\pm$0.039&0.843$\pm$0.046&0.843$\pm$0.035&\textbf{0.846$\pm$0.036}\\
monk&0.803$\pm$0.025&0.803$\pm$0.052&0.807$\pm$0.040&0.828$\pm$0.032&0.828$\pm$0.036&\textbf{0.833$\pm$0.020}&\textbf{0.833$\pm$0.038}\\
pima&0.770$\pm$0.047&0.767$\pm$0.041&0.768$\pm$0.037&0.773$\pm$0.059&0.768$\pm$0.042&0.772$\pm$0.046&\textbf{0.775$\pm$0.036}\\
plrx&0.714$\pm$0.126&0.720$\pm$0.132&0.720$\pm$0.132&0.725$\pm$0.115&0.736$\pm$0.059&0.736$\pm$0.094&\textbf{0.742$\pm$0.090}\\
pop failures&0.926$\pm$0.017&0.941$\pm$0.030&\textbf{0.950$\pm$0.011}&0.944$\pm$0.023&0.930$\pm$0.035&0.926$\pm$0.022&0.931$\pm$0.032\\
sonar&0.746$\pm$0.099&0.755$\pm$0.026&0.760$\pm$0.043&0.775$\pm$0.094&0.765$\pm$0.071&0.779$\pm$0.109&\textbf{0.789$\pm$0.098}\\
titanic&0.776$\pm$0.016&0.777$\pm$0.016&0.776$\pm$0.016&0.777$\pm$0.016&0.777$\pm$0.016&\textbf{0.778$\pm$0.017}&\textbf{0.778$\pm$0.018}\\
knowledge&0.963$\pm$0.029&0.973$\pm$0.027&\textbf{0.980$\pm$0.007}&0.963$\pm$0.032&0.965$\pm$0.034&0.970$\pm$0.027&0.975$\pm$0.023\\
wisconsin&0.966$\pm$0.006&0.967$\pm$0.004&\textbf{0.970$\pm$0.003}&\textbf{0.970$\pm$0.006}&\textbf{0.970$\pm$0.006}&\textbf{0.970$\pm$0.003}&\textbf{0.970$\pm$0.006}\\

			\midrule
		\end{tabularx}
	\end{table}

        \begin{table}[htbp]
		\centering
		\fontsize{10pt}{5pt}\selectfont
		\caption{Comparison of the mean $F_{1}$-score ($F_{1}\pm sd$) of seven SVMs with linear kernel in benchmark datasets.}
		\label{tab:res classification linear F1}
		\begin{tabularx}{\hsize}{@{}@{\extracolsep{\fill}}ZZZZZZZZ@{}}
        
			\toprule 
			\multicolumn{8}{c}{(a) $0\%$ noise} \\ 
			\midrule 
			  dataset& Pin-SVM & ALS-SVM & EN-SVM & BQ-SVM & BALS-SVM & BAEN-SVM &$\epsilon$-BAEN-SVM \\
			\midrule 
australian&0.861$\pm$0.038&0.888$\pm$0.020&0.878$\pm$0.021&0.888$\pm$0.014&\textbf{0.891$\pm$0.015}&\textbf{0.891$\pm$0.015}&0.879$\pm$0.022\\
blood&0.867$\pm$0.017&0.871$\pm$0.018&0.871$\pm$0.019&0.869$\pm$0.019&0.871$\pm$0.018&0.871$\pm$0.018&\textbf{0.874$\pm$0.027}\\
coimbra&0.713$\pm$0.084&0.722$\pm$0.068&0.738$\pm$0.085&0.746$\pm$0.086&0.726$\pm$0.086&\textbf{0.772$\pm$0.089}&0.757$\pm$0.083\\
diabetic&0.735$\pm$0.025&0.744$\pm$0.028&0.744$\pm$0.019&\textbf{0.751$\pm$0.030}&0.750$\pm$0.034&0.748$\pm$0.029&0.747$\pm$0.014\\
fertility&0.936$\pm$0.016&0.930$\pm$0.016&0.936$\pm$0.016&0.936$\pm$0.016&0.936$\pm$0.016&\textbf{0.941$\pm$0.023}&\textbf{0.941$\pm$0.023}\\
haberman&0.849$\pm$0.055&0.848$\pm$0.054&0.849$\pm$0.056&0.849$\pm$0.055&0.850$\pm$0.046&0.852$\pm$0.038&\textbf{0.855$\pm$0.065}\\
heart&0.884$\pm$0.033&0.886$\pm$0.035&0.886$\pm$0.039&0.887$\pm$0.026&\textbf{0.888$\pm$0.036}&0.886$\pm$0.024&0.886$\pm$0.030\\
monk&0.830$\pm$0.030&0.790$\pm$0.015&0.866$\pm$0.053&\textbf{0.881$\pm$0.042}&0.845$\pm$0.041&0.845$\pm$0.046&0.848$\pm$0.043\\
pima&0.835$\pm$0.025&0.829$\pm$0.030&0.838$\pm$0.036&0.840$\pm$0.034&0.836$\pm$0.039&0.842$\pm$0.021&\textbf{0.847$\pm$0.033}\\
plrx&0.828$\pm$0.088&0.828$\pm$0.088&0.828$\pm$0.088&\textbf{0.834$\pm$0.091}&0.831$\pm$0.092&\textbf{0.834$\pm$0.090}&\textbf{0.834$\pm$0.091}\\
pop failures&0.572$\pm$0.084&0.743$\pm$0.115&0.748$\pm$0.047&\textbf{0.761$\pm$0.129}&0.413$\pm$0.294&0.610$\pm$0.138&0.622$\pm$0.124\\
sonar&0.771$\pm$0.062&0.791$\pm$0.055&\textbf{0.800$\pm$0.054}&0.799$\pm$0.069&0.787$\pm$0.090&0.788$\pm$0.103&0.778$\pm$0.080\\
titanic&0.847$\pm$0.012&0.855$\pm$0.014&0.850$\pm$0.014&0.847$\pm$0.012&0.848$\pm$0.013&0.848$\pm$0.013&\textbf{0.856$\pm$0.015}\\
knowledge&0.966$\pm$0.045&0.983$\pm$0.015&\textbf{0.991$\pm$0.009}&0.989$\pm$0.007&0.961$\pm$0.052&0.981$\pm$0.021&0.989$\pm$0.007\\
wisconsin&0.974$\pm$0.007&0.977$\pm$0.007&0.977$\pm$0.004&0.977$\pm$0.007&0.977$\pm$0.006&0.977$\pm$0.006&\textbf{0.979$\pm$0.006}\\

			\midrule
			\multicolumn{8}{c}{(b) $25\%$ label noise} \\ 
			\midrule 
			dataset& Pin-SVM & ALS-SVM & EN-SVM & BQ-SVM & BALS-SVM & BAEN-SVM &$\epsilon$-BAEN-SVM  \\
			\midrule 
australian&0.868$\pm$0.037&0.874$\pm$0.021&0.866$\pm$0.029&\textbf{0.888$\pm$0.015}&0.884$\pm$0.009&0.887$\pm$0.011&0.866$\pm$0.035\\
blood&0.869$\pm$0.019&0.869$\pm$0.024&0.869$\pm$0.022&0.869$\pm$0.019&0.871$\pm$0.024&0.870$\pm$0.018&\textbf{0.874$\pm$0.026}\\
coimbra&0.732$\pm$0.052&0.712$\pm$0.065&0.701$\pm$0.077&0.725$\pm$0.089&0.708$\pm$0.097&0.731$\pm$0.084&\textbf{0.744$\pm$0.085}\\
diabetic&0.663$\pm$0.044&0.653$\pm$0.033&0.660$\pm$0.027&0.719$\pm$0.010&0.684$\pm$0.035&\textbf{0.721$\pm$0.011}&0.686$\pm$0.024\\
fertility&0.888$\pm$0.098&0.884$\pm$0.072&0.902$\pm$0.075&\textbf{0.941$\pm$0.023}&0.936$\pm$0.016&\textbf{0.941$\pm$0.023}&\textbf{0.941$\pm$0.023}\\
haberman&0.846$\pm$0.052&0.848$\pm$0.050&0.848$\pm$0.043&0.850$\pm$0.054&0.851$\pm$0.048&0.850$\pm$0.052&\textbf{0.855$\pm$0.050}\\
heart&0.859$\pm$0.044&0.849$\pm$0.034&0.861$\pm$0.031&0.870$\pm$0.025&0.854$\pm$0.050&\textbf{0.871$\pm$0.044}&0.865$\pm$0.043\\
monk&0.814$\pm$0.016&0.794$\pm$0.057&0.798$\pm$0.040&0.826$\pm$0.022&\textbf{0.840$\pm$0.041}&0.819$\pm$0.032&0.827$\pm$0.042\\
pima&0.837$\pm$0.027&0.830$\pm$0.023&0.838$\pm$0.025&0.841$\pm$0.028&0.839$\pm$0.025&0.840$\pm$0.024&\textbf{0.843$\pm$0.033}\\
plrx&0.828$\pm$0.088&0.801$\pm$0.087&0.830$\pm$0.095&0.831$\pm$0.092&0.834$\pm$0.091&\textbf{0.837$\pm$0.075}&0.834$\pm$0.090\\
pop failures&0.234$\pm$0.147&0.216$\pm$0.063&0.229$\pm$0.057&0.265$\pm$0.241&0.262$\pm$0.028&0.272$\pm$0.123&\textbf{0.424$\pm$0.208}\\
sonar&0.734$\pm$0.100&0.743$\pm$0.095&0.760$\pm$0.063&0.762$\pm$0.067&0.760$\pm$0.079&0.765$\pm$0.099&\textbf{0.772$\pm$0.078}\\
titanic&0.847$\pm$0.012&0.850$\pm$0.013&0.850$\pm$0.013&0.849$\pm$0.011&0.850$\pm$0.013&0.851$\pm$0.014&\textbf{0.858$\pm$0.016}\\
knowledge&0.932$\pm$0.049&0.946$\pm$0.022&0.946$\pm$0.022&0.949$\pm$0.056&0.936$\pm$0.050&0.944$\pm$0.053&\textbf{0.964$\pm$0.031}\\
wisconsin&0.945$\pm$0.006&0.962$\pm$0.009&0.970$\pm$0.008&\textbf{0.977$\pm$0.006}&\textbf{0.977$\pm$0.006}&0.976$\pm$0.006&0.976$\pm$0.009\\

			\midrule 
			\multicolumn{8}{c}{(c) $25\%$ feature noise} \\ 
			\midrule 
			dataset & Pin-SVM & ALS-SVM & EN-SVM & BQ-SVM & BALS-SVM & BAEN-SVM &$\epsilon$-BAEN-SVM \\ 
			\midrule 
australian&0.874$\pm$0.026&0.888$\pm$0.016&0.887$\pm$0.023&0.886$\pm$0.012&\textbf{0.889$\pm$0.026}&0.887$\pm$0.012&0.884$\pm$0.025\\
blood&0.867$\pm$0.016&0.871$\pm$0.018&0.870$\pm$0.017&0.867$\pm$0.018&0.871$\pm$0.018&0.870$\pm$0.018&\textbf{0.872$\pm$0.018}\\
coimbra&0.736$\pm$0.084&0.728$\pm$0.085&0.750$\pm$0.065&\textbf{0.760$\pm$0.099}&0.740$\pm$0.073&0.754$\pm$0.080&0.742$\pm$0.074\\
diabetic&0.659$\pm$0.077&0.635$\pm$0.052&0.665$\pm$0.059&0.678$\pm$0.035&0.675$\pm$0.031&\textbf{0.690$\pm$0.015}&0.685$\pm$0.032\\
fertility&0.936$\pm$0.016&0.936$\pm$0.016&0.936$\pm$0.016&\textbf{0.941$\pm$0.012}&0.936$\pm$0.016&\textbf{0.941$\pm$0.023}&\textbf{0.941$\pm$0.023}\\
haberman&0.846$\pm$0.059&0.845$\pm$0.058&0.848$\pm$0.048&0.849$\pm$0.056&0.852$\pm$0.052&0.849$\pm$0.051&\textbf{0.856$\pm$0.043}\\
heart&0.883$\pm$0.036&0.885$\pm$0.028&0.886$\pm$0.025&0.883$\pm$0.029&0.886$\pm$0.030&0.888$\pm$0.026&\textbf{0.893$\pm$0.021}\\
monk&0.792$\pm$0.031&0.789$\pm$0.060&0.793$\pm$0.044&0.815$\pm$0.028&0.818$\pm$0.035&0.823$\pm$0.047&\textbf{0.825$\pm$0.038}\\
pima&0.836$\pm$0.036&0.835$\pm$0.031&0.834$\pm$0.029&\textbf{0.839$\pm$0.035}&0.835$\pm$0.032&\textbf{0.839$\pm$0.035}&\textbf{0.839$\pm$0.031}\\
plrx&0.828$\pm$0.088&0.831$\pm$0.091&0.831$\pm$0.091&0.831$\pm$0.091&0.831$\pm$0.091&\textbf{0.834$\pm$0.086}&\textbf{0.834$\pm$0.094}\\
pop failures&0.425$\pm$0.126&0.495$\pm$0.162&\textbf{0.644$\pm$0.145}&0.541$\pm$0.189&0.361$\pm$0.352&0.463$\pm$0.175&0.503$\pm$0.126\\
sonar&0.765$\pm$0.093&0.777$\pm$0.066&0.778$\pm$0.043&\textbf{0.792$\pm$0.019}&0.770$\pm$0.085&0.777$\pm$0.078&0.786$\pm$0.123\\
titanic&0.847$\pm$0.012&0.847$\pm$0.016&0.851$\pm$0.015&0.847$\pm$0.012&0.847$\pm$0.012&0.848$\pm$0.013&\textbf{0.852$\pm$0.013}\\
knowledge&0.964$\pm$0.031&0.973$\pm$0.031&\textbf{0.982$\pm$0.007}&0.963$\pm$0.036&0.965$\pm$0.038&0.972$\pm$0.023&0.978$\pm$0.019\\
wisconsin&0.974$\pm$0.005&0.975$\pm$0.004&\textbf{0.977$\pm$0.004}&\textbf{0.977$\pm$0.006}&\textbf{0.977$\pm$0.006}&\textbf{0.977$\pm$0.004}&\textbf{0.977$\pm$0.006}\\

			\midrule
		\end{tabularx}
	\end{table}

   \begin{table}[htbp]
		\centering
		\fontsize{10pt}{5pt}\selectfont
		\caption{Comparison of the mean accuracy (ACC$\pm$sd) of seven SVMs with RBF kernel in benchmark datasets.}
		\label{tab:res classification Gaussian acc}
		\begin{tabularx}{\hsize}{@{}@{\extracolsep{\fill}}ZZZZZZZZ@{}}
			\toprule 
			\multicolumn{8}{c}{(a) $0\%$ noise} \\ 
			\midrule 
			dataset& Pin-SVM & ALS-SVM & EN-SVM & BQ-SVM & BALS-SVM & BAEN-SVM &$\epsilon$-BAEN-SVM \\
			\midrule 
australian&0.871$\pm$0.024&\textbf{0.872$\pm$0.022}&0.871$\pm$0.018&\textbf{0.872$\pm$0.026}&\textbf{0.872$\pm$0.022}&\textbf{0.872$\pm$0.022}&0.871$\pm$0.014\\
blood&0.794$\pm$0.042&0.797$\pm$0.050&0.799$\pm$0.045&0.795$\pm$0.043&0.798$\pm$0.040&0.799$\pm$0.034&\textbf{0.801$\pm$0.032}\\
coimbra&0.758$\pm$0.068&0.758$\pm$0.041&\textbf{0.784$\pm$0.088}&0.758$\pm$0.087&0.758$\pm$0.041&0.767$\pm$0.073&0.783$\pm$0.083\\
diabetic&0.732$\pm$0.028&0.730$\pm$0.020&\textbf{0.735$\pm$0.024}&0.712$\pm$0.028&0.719$\pm$0.029&0.721$\pm$0.034&0.720$\pm$0.029\\
fertility&0.880$\pm$0.027&0.890$\pm$0.022&0.890$\pm$0.022&0.880$\pm$0.027&0.880$\pm$0.027&0.900$\pm$0.035&\textbf{0.910$\pm$0.065}\\
haberman&0.755$\pm$0.063&0.761$\pm$0.076&0.758$\pm$0.073&0.765$\pm$0.066&0.761$\pm$0.076&\textbf{0.768$\pm$0.060}&0.764$\pm$0.064\\
heart&0.802$\pm$0.044&0.809$\pm$0.046&0.819$\pm$0.058&\textbf{0.829$\pm$0.033}&0.809$\pm$0.046&0.813$\pm$0.044&0.823$\pm$0.042\\
monk&0.979$\pm$0.015&0.979$\pm$0.017&\textbf{1.000$\pm$0.000}&\textbf{1.000$\pm$0.000}&0.975$\pm$0.013&0.981$\pm$0.013&0.995$\pm$0.006\\
pima&0.767$\pm$0.045&0.768$\pm$0.051&0.770$\pm$0.042&0.768$\pm$0.039&0.769$\pm$0.051&\textbf{0.773$\pm$0.061}&0.772$\pm$0.042\\
plrx&0.725$\pm$0.133&0.725$\pm$0.133&0.725$\pm$0.133&0.725$\pm$0.133&0.725$\pm$0.133&0.725$\pm$0.133&\textbf{0.731$\pm$0.076}\\
pop failures&0.944$\pm$0.020&0.944$\pm$0.020&0.948$\pm$0.017&0.948$\pm$0.017&0.944$\pm$0.020&\textbf{0.950$\pm$0.018}&\textbf{0.950$\pm$0.017}\\
sonar&0.909$\pm$0.045&0.909$\pm$0.045&\textbf{0.914$\pm$0.046}&0.909$\pm$0.045&0.909$\pm$0.045&0.909$\pm$0.056&\textbf{0.914$\pm$0.036}\\
titanic&0.790$\pm$0.018&\textbf{0.791$\pm$0.018}&0.790$\pm$0.018&0.790$\pm$0.018&\textbf{0.791$\pm$0.018}&\textbf{0.791$\pm$0.018}&\textbf{0.791$\pm$0.018}\\
knowledge&0.968$\pm$0.026&0.980$\pm$0.011&0.980$\pm$0.007&0.978$\pm$0.020&0.975$\pm$0.023&0.973$\pm$0.021&\textbf{0.983$\pm$0.007}\\
wisconsin&0.973$\pm$0.006&0.974$\pm$0.010&0.974$\pm$0.011&0.974$\pm$0.008&0.974$\pm$0.010&0.974$\pm$0.010&\textbf{0.976$\pm$0.008}\\

			\midrule 
			\multicolumn{8}{c}{(b) $25\%$ label noise} \\ 
			\midrule 
			dataset & Pin-SVM & ALS-SVM & EN-SVM & BQ-SVM & BALS-SVM & BAEN-SVM &$\epsilon$-BAEN-SVM \\ 
			\midrule 
australian&0.895$\pm$0.085&0.895$\pm$0.078&0.895$\pm$0.078&0.905$\pm$0.075&0.905$\pm$0.075&\textbf{0.914$\pm$0.062}&\textbf{0.914$\pm$0.062}\\
blood&0.782$\pm$0.049&0.781$\pm$0.045&0.785$\pm$0.043&0.791$\pm$0.040&0.790$\pm$0.017&\textbf{0.793$\pm$0.042}&0.789$\pm$0.059\\
coimbra&0.733$\pm$0.048&\textbf{0.742$\pm$0.067}&0.741$\pm$0.075&0.724$\pm$0.037&0.733$\pm$0.077&0.741$\pm$0.053&\textbf{0.742$\pm$0.059}\\
diabetic&0.662$\pm$0.037&0.663$\pm$0.024&0.663$\pm$0.024&0.658$\pm$0.036&\textbf{0.665$\pm$0.028}&\textbf{0.665$\pm$0.026}&\textbf{0.665$\pm$0.026}\\
fertility&0.880$\pm$0.027&0.880$\pm$0.027&0.880$\pm$0.027&\textbf{0.890$\pm$0.042}&\textbf{0.890$\pm$0.065}&\textbf{0.890$\pm$0.114}&\textbf{0.890$\pm$0.096}\\
haberman&0.761$\pm$0.061&0.758$\pm$0.068&0.761$\pm$0.069&0.768$\pm$0.064&0.764$\pm$0.067&\textbf{0.771$\pm$0.043}&0.768$\pm$0.072\\
heart&0.782$\pm$0.069&0.776$\pm$0.070&0.789$\pm$0.060&0.789$\pm$0.051&0.789$\pm$0.054&0.793$\pm$0.080&\textbf{0.799$\pm$0.052}\\
monk&0.944$\pm$0.041&0.921$\pm$0.038&0.942$\pm$0.026&\textbf{0.972$\pm$0.013}&0.970$\pm$0.013&\textbf{0.972$\pm$0.013}&\textbf{0.972$\pm$0.013}\\
pima&0.762$\pm$0.052&0.766$\pm$0.040&0.767$\pm$0.046&0.767$\pm$0.038&0.767$\pm$0.040&\textbf{0.768$\pm$0.049}&0.767$\pm$0.040\\
plrx&0.725$\pm$0.133&0.725$\pm$0.133&0.725$\pm$0.133&0.725$\pm$0.133&0.725$\pm$0.133&0.725$\pm$0.133&\textbf{0.731$\pm$0.137}\\
pop failures&0.915$\pm$0.022&0.919$\pm$0.028&0.919$\pm$0.028&0.915$\pm$0.022&\textbf{0.920$\pm$0.024}&\textbf{0.920$\pm$0.024}&\textbf{0.920$\pm$0.024}\\
sonar&0.807$\pm$0.094&0.802$\pm$0.116&\textbf{0.826$\pm$0.095}&0.812$\pm$0.100&0.822$\pm$0.089&\textbf{0.826$\pm$0.095}&0.822$\pm$0.089\\
titanic&0.775$\pm$0.018&0.780$\pm$0.017&0.779$\pm$0.018&0.777$\pm$0.018&0.783$\pm$0.019&0.783$\pm$0.017&\textbf{0.785$\pm$0.022}\\
knowledge&0.943$\pm$0.030&0.940$\pm$0.032&0.950$\pm$0.026&0.955$\pm$0.030&0.948$\pm$0.038&\textbf{0.958$\pm$0.029}&\textbf{0.958$\pm$0.036}\\
wisconsin&0.966$\pm$0.003&0.963$\pm$0.006&0.967$\pm$0.008&0.970$\pm$0.008&0.970$\pm$0.008&0.970$\pm$0.006&\textbf{0.971$\pm$0.007}\\

			\midrule
			\multicolumn{8}{c}{(c) $25\%$ feature noise} \\ 
			\midrule 
			dataset & Pin-SVM & ALS-SVM & EN-SVM & BQ-SVM & BALS-SVM & BAEN-SVM &$\epsilon$-BAEN-SVM \\ 
			\midrule 
australian&0.877$\pm$0.073&0.886$\pm$0.087&0.886$\pm$0.087&0.886$\pm$0.064&0.886$\pm$0.064&0.895$\pm$0.078&\textbf{0.905$\pm$0.058}\\
blood&0.770$\pm$0.046&0.779$\pm$0.041&\textbf{0.783$\pm$0.057}&\textbf{0.783$\pm$0.057}&0.779$\pm$0.050&0.781$\pm$0.038&\textbf{0.783$\pm$0.028}\\
coimbra&0.707$\pm$0.095&0.707$\pm$0.078&0.757$\pm$0.102&0.741$\pm$0.088&0.759$\pm$0.079&\textbf{0.767$\pm$0.068}&0.749$\pm$0.085\\
diabetic&0.665$\pm$0.042&0.672$\pm$0.036&0.676$\pm$0.036&0.671$\pm$0.031&0.672$\pm$0.037&0.674$\pm$0.032&\textbf{0.678$\pm$0.038}\\
fertility&\textbf{0.900$\pm$0.035}&0.890$\pm$0.022&0.890$\pm$0.022&\textbf{0.900$\pm$0.035}&\textbf{0.900$\pm$0.035}&\textbf{0.900$\pm$0.035}&\textbf{0.900$\pm$0.035}\\
haberman&0.742$\pm$0.066&0.758$\pm$0.068&0.758$\pm$0.084&0.758$\pm$0.064&0.755$\pm$0.078&\textbf{0.764$\pm$0.079}&0.758$\pm$0.076\\
heart&0.799$\pm$0.069&0.806$\pm$0.077&0.809$\pm$0.068&0.812$\pm$0.068&0.809$\pm$0.074&0.809$\pm$0.078&\textbf{0.816$\pm$0.073}\\
monk&\textbf{0.956$\pm$0.015}&0.954$\pm$0.020&\textbf{0.956$\pm$0.013}&0.954$\pm$0.028&0.947$\pm$0.024&0.949$\pm$0.019&0.947$\pm$0.021\\
pima&0.767$\pm$0.042&\textbf{0.771$\pm$0.043}&\textbf{0.771$\pm$0.043}&0.768$\pm$0.042&\textbf{0.771$\pm$0.043}&0.768$\pm$0.042&\textbf{0.771$\pm$0.047}\\
plrx&0.725$\pm$0.133&0.725$\pm$0.133&0.725$\pm$0.133&0.725$\pm$0.133&0.725$\pm$0.133&0.726$\pm$0.124&\textbf{0.731$\pm$0.131}\\
pop failures&0.928$\pm$0.029&0.928$\pm$0.028&0.933$\pm$0.032&0.930$\pm$0.031&0.930$\pm$0.029&\textbf{0.935$\pm$0.017}&\textbf{0.935$\pm$0.032}\\
sonar&0.842$\pm$0.074&0.837$\pm$0.084&0.866$\pm$0.080&0.842$\pm$0.074&0.842$\pm$0.074&0.851$\pm$0.073&\textbf{0.875$\pm$0.072}\\
titanic&0.785$\pm$0.017&0.788$\pm$0.018&\textbf{0.789$\pm$0.020}&0.788$\pm$0.016&0.785$\pm$0.016&0.788$\pm$0.016&0.787$\pm$0.019\\
knowledge&0.963$\pm$0.035&0.965$\pm$0.031&0.965$\pm$0.031&0.968$\pm$0.034&0.965$\pm$0.032&0.970$\pm$0.030&\textbf{0.975$\pm$0.023}\\
wisconsin&0.969$\pm$0.004&0.970$\pm$0.006&0.970$\pm$0.006&\textbf{0.971$\pm$0.007}&0.970$\pm$0.008&0.970$\pm$0.008&\textbf{0.971$\pm$0.007}\\

			\midrule
		\end{tabularx}
	\end{table}

  \begin{table}[htp]
		\centering
		\fontsize{10pt}{5pt}\selectfont
		\caption{Comparison of the mean $F_{1}$-score ($F_{1}\pm sd$) of seven SVMs with RBF kernel in benchmark datasets.}
		\label{tab:res classification Gaussian F1}
		\begin{tabularx}{\hsize}{@{}@{\extracolsep{\fill}}ZZZZZZZZ@{}}
			\toprule 
			\multicolumn{8}{c}{(a) $0\%$ noise} \\ 
			\midrule 
			  dataset& Pin-SVM & ALS-SVM & EN-SVM & BQ-SVM & BALS-SVM & BAEN-SVM & $\epsilon$-BAEN-SVM   \\
			\midrule 
australian&0.882$\pm$0.022&\textbf{0.886$\pm$0.016}&0.885$\pm$0.013&0.884$\pm$0.012&0.885$\pm$0.016&\textbf{0.886$\pm$0.011}&0.884$\pm$0.008\\
blood&0.873$\pm$0.028&0.876$\pm$0.033&0.876$\pm$0.030&0.876$\pm$0.031&0.876$\pm$0.027&\textbf{0.878$\pm$0.021}&0.877$\pm$0.024\\
coimbra&0.736$\pm$0.069&0.736$\pm$0.031&0.765$\pm$0.076&0.732$\pm$0.048&0.736$\pm$0.031&0.751$\pm$0.078&\textbf{0.769$\pm$0.075}\\
diabetic&\textbf{0.742$\pm$0.029}&0.727$\pm$0.035&0.729$\pm$0.022&0.731$\pm$0.026&0.727$\pm$0.027&0.739$\pm$0.018&0.732$\pm$0.027\\
fertility&0.936$\pm$0.016&0.941$\pm$0.012&0.941$\pm$0.012&0.936$\pm$0.016&0.936$\pm$0.016&0.946$\pm$0.019&\textbf{0.950$\pm$0.036}\\
haberman&0.847$\pm$0.042&0.851$\pm$0.052&0.849$\pm$0.052&0.856$\pm$0.061&0.853$\pm$0.042&\textbf{0.858$\pm$0.040}&0.854$\pm$0.046\\
heart&0.863$\pm$0.039&0.869$\pm$0.031&0.872$\pm$0.041&\textbf{0.880$\pm$0.023}&0.869$\pm$0.031&0.870$\pm$0.050&0.873$\pm$0.027\\
monk&0.978$\pm$0.016&0.979$\pm$0.018&\textbf{1.000$\pm$0.000}&\textbf{1.000$\pm$0.000}&0.974$\pm$0.013&0.980$\pm$0.014&0.995$\pm$0.007\\
pima&0.836$\pm$0.036&0.835$\pm$0.037&0.837$\pm$0.037&0.837$\pm$0.037&0.837$\pm$0.038&\textbf{0.843$\pm$0.043}&0.839$\pm$0.032\\
plrx&\textbf{0.834$\pm$0.091}&\textbf{0.834$\pm$0.091}&\textbf{0.834$\pm$0.091}&\textbf{0.834$\pm$0.091}&\textbf{0.834$\pm$0.091}&\textbf{0.834$\pm$0.091}&\textbf{0.834$\pm$0.091}\\
pop failures&0.523$\pm$0.220&0.533$\pm$0.150&0.591$\pm$0.153&0.575$\pm$0.179&0.523$\pm$0.220&0.590$\pm$0.188&\textbf{0.601$\pm$0.128}\\
sonar&0.915$\pm$0.045&0.915$\pm$0.045&0.919$\pm$0.046&0.915$\pm$0.045&0.915$\pm$0.045&0.917$\pm$0.042&\textbf{0.922$\pm$0.032}\\
titanic&0.864$\pm$0.012&0.864$\pm$0.012&0.864$\pm$0.012&0.864$\pm$0.012&\textbf{0.865$\pm$0.013}&\textbf{0.865$\pm$0.013}&\textbf{0.865$\pm$0.013}\\
knowledge&0.968$\pm$0.030&0.981$\pm$0.013&0.982$\pm$0.007&0.979$\pm$0.021&0.976$\pm$0.026&0.973$\pm$0.024&\textbf{0.984$\pm$0.007}\\
wisconsin&0.979$\pm$0.008&0.980$\pm$0.008&0.980$\pm$0.009&0.980$\pm$0.004&0.980$\pm$0.004&0.980$\pm$0.008&\textbf{0.981$\pm$0.007}\\

			\midrule
			\multicolumn{8}{c}{(b) $25\%$ label noise} \\ 
			\midrule 
			dataset& Pin-SVM & ALS-SVM & EN-SVM & BQ-SVM & BALS-SVM & BAEN-SVM & $\epsilon$-BAEN-SVM \\
			\midrule 
australian&0.935$\pm$0.053&0.936$\pm$0.046&0.936$\pm$0.046&0.942$\pm$0.045&0.942$\pm$0.045&\textbf{0.947$\pm$0.038}&\textbf{0.947$\pm$0.038}\\
blood&0.870$\pm$0.032&0.870$\pm$0.015&0.872$\pm$0.026&0.873$\pm$0.025&0.872$\pm$0.033&\textbf{0.876$\pm$0.031}&0.873$\pm$0.035\\
coimbra&0.703$\pm$0.057&0.712$\pm$0.111&0.712$\pm$0.111&0.711$\pm$0.052&0.706$\pm$0.109&0.713$\pm$0.109&\textbf{0.721$\pm$0.080}\\
diabetic&0.664$\pm$0.041&0.646$\pm$0.035&0.663$\pm$0.031&0.685$\pm$0.021&0.681$\pm$0.024&0.686$\pm$0.026&\textbf{0.689$\pm$0.029}\\
fertility&0.936$\pm$0.016&0.936$\pm$0.016&0.936$\pm$0.016&\textbf{0.941$\pm$0.023}&0.940$\pm$0.036&\textbf{0.941$\pm$0.023}&\textbf{0.941$\pm$0.023}\\
haberman&0.851$\pm$0.043&0.849$\pm$0.046&0.853$\pm$0.056&0.853$\pm$0.046&0.853$\pm$0.054&\textbf{0.857$\pm$0.033}&0.854$\pm$0.050\\
heart&0.849$\pm$0.040&0.848$\pm$0.047&0.854$\pm$0.041&0.852$\pm$0.040&0.853$\pm$0.039&0.857$\pm$0.022&\textbf{0.861$\pm$0.043}\\
monk&0.943$\pm$0.042&0.917$\pm$0.040&0.939$\pm$0.027&\textbf{0.972$\pm$0.014}&0.969$\pm$0.013&\textbf{0.972$\pm$0.014}&\textbf{0.972$\pm$0.014}\\
pima&0.833$\pm$0.039&0.832$\pm$0.029&0.835$\pm$0.034&\textbf{0.838$\pm$0.031}&0.835$\pm$0.038&0.836$\pm$0.037&0.835$\pm$0.030\\
plrx&0.834$\pm$0.091&0.834$\pm$0.091&0.834$\pm$0.091&0.834$\pm$0.091&0.834$\pm$0.091&0.834$\pm$0.091&\textbf{0.836$\pm$0.093}\\
pop failures&0.224$\pm$0.075&0.247$\pm$0.073&0.277$\pm$0.077&0.326$\pm$0.178&0.296$\pm$0.155&0.326$\pm$0.142&\textbf{0.332$\pm$0.087}\\
sonar&0.817$\pm$0.098&0.810$\pm$0.118&0.835$\pm$0.111&0.823$\pm$0.100&0.834$\pm$0.089&\textbf{0.838$\pm$0.093}&0.835$\pm$0.099\\
titanic&0.851$\pm$0.015&0.855$\pm$0.013&0.855$\pm$0.013&0.852$\pm$0.015&0.858$\pm$0.014&0.857$\pm$0.013&\textbf{0.859$\pm$0.017}\\
knowledge&0.945$\pm$0.032&0.944$\pm$0.033&0.953$\pm$0.027&0.958$\pm$0.028&0.950$\pm$0.039&\textbf{0.960$\pm$0.030}&0.958$\pm$0.042\\
wisconsin&0.974$\pm$0.004&0.972$\pm$0.004&0.975$\pm$0.007&0.977$\pm$0.007&0.977$\pm$0.007&0.977$\pm$0.006&\textbf{0.978$\pm$0.007}\\

			\midrule 
			\multicolumn{8}{c}{(c) $25\%$ feature noise} \\ 
			\midrule 
			dataset& Pin-SVM & ALS-SVM & EN-SVM & BQ-SVM & BALS-SVM & BAEN-SVM & $\epsilon$-BAEN-SVM   \\
			\midrule 
australian&0.926$\pm$0.043&0.931$\pm$0.051&0.931$\pm$0.051&0.931$\pm$0.051&0.931$\pm$0.051&0.936$\pm$0.046&\textbf{0.943$\pm$0.045}\\
blood&0.868$\pm$0.016&0.868$\pm$0.033&0.869$\pm$0.027&\textbf{0.873$\pm$0.032}&0.872$\pm$0.029&\textbf{0.873$\pm$0.028}&\textbf{0.873$\pm$0.025}\\
coimbra&0.709$\pm$0.077&0.702$\pm$0.063&0.750$\pm$0.099&0.744$\pm$0.072&0.767$\pm$0.073&\textbf{0.770$\pm$0.079}&0.756$\pm$0.080\\
diabetic&0.682$\pm$0.045&0.682$\pm$0.034&0.687$\pm$0.043&0.688$\pm$0.045&0.686$\pm$0.038&\textbf{0.698$\pm$0.028}&0.692$\pm$0.037\\
fertility&\textbf{0.946$\pm$0.019}&0.941$\pm$0.012&0.941$\pm$0.012&\textbf{0.946$\pm$0.019}&\textbf{0.946$\pm$0.019}&\textbf{0.946$\pm$0.019}&\textbf{0.946$\pm$0.019}\\
haberman&0.846$\pm$0.050&0.852$\pm$0.050&0.855$\pm$0.061&0.852$\pm$0.055&0.849$\pm$0.052&0.854$\pm$0.051&\textbf{0.857$\pm$0.054}\\
heart&0.862$\pm$0.045&0.868$\pm$0.052&\textbf{0.869$\pm$0.056}&0.868$\pm$0.047&\textbf{0.869$\pm$0.051}&\textbf{0.869$\pm$0.054}&\textbf{0.869$\pm$0.055}\\
monk&0.954$\pm$0.015&0.952$\pm$0.010&\textbf{0.955$\pm$0.011}&0.952$\pm$0.030&0.945$\pm$0.026&0.947$\pm$0.020&0.945$\pm$0.023\\
pima&0.834$\pm$0.034&0.837$\pm$0.036&0.835$\pm$0.035&0.836$\pm$0.044&0.836$\pm$0.034&0.837$\pm$0.042&\textbf{0.838$\pm$0.032}\\
plrx&0.834$\pm$0.091&0.834$\pm$0.091&0.834$\pm$0.091&0.834$\pm$0.091&0.834$\pm$0.091&0.834$\pm$0.091&\textbf{0.836$\pm$0.091}\\
pop failures&0.319$\pm$0.252&0.319$\pm$0.252&0.372$\pm$0.268&0.319$\pm$0.252&0.326$\pm$0.261&\textbf{0.474$\pm$0.155}&0.434$\pm$0.088\\
sonar&0.854$\pm$0.071&0.851$\pm$0.079&0.877$\pm$0.042&0.854$\pm$0.071&0.854$\pm$0.071&0.866$\pm$0.065&\textbf{0.881$\pm$0.071}\\
titanic&0.861$\pm$0.012&0.861$\pm$0.011&0.861$\pm$0.011&0.861$\pm$0.011&0.861$\pm$0.011&0.861$\pm$0.011&\textbf{0.862$\pm$0.012}\\
knowledge&0.963$\pm$0.039&0.966$\pm$0.033&0.966$\pm$0.033&0.969$\pm$0.024&0.966$\pm$0.034&\textbf{0.976$\pm$0.026}&0.971$\pm$0.030\\
wisconsin&0.976$\pm$0.004&0.977$\pm$0.006&0.977$\pm$0.007&\textbf{0.978$\pm$0.006}&0.977$\pm$0.006&0.977$\pm$0.006&\textbf{0.978$\pm$0.006}\\

			\midrule
		\end{tabularx}
	\end{table}

        \subsection{Comparisons by statistical test} 
        In this section, we apply the Friedman test \citep{Janez2006StattestClassifier} to evaluate whether there are statistically significant differences between the seven SVM models across 15 datasets. The null hypothesis of the Friedman test assumes that all models perform equivalently. The test statistic $F_{F}$ follows an $F$ distribution with degrees of freedom $(F(k-1,(k-1)(N-1)))$, where $N=15$ is the number of datasets and $k=7$ is the number of classifiers. The $F_{F}$ statistic is defined as
            \begin{equation}F_{F} = \frac{(N - 1)\chi^{2}_F}{N(k-1)-\chi^{2}_F}.\end{equation}
            where $\chi^{2}_F$ is the raw Friedman statistic, given by 
            \begin{equation}\chi^{2}_F = \frac{12N}{k(k+1)}\bigg{(}\sum_{j=1}^{k}R_{j}^{2} - \frac{k(k+1)^{2}}{4}\bigg{)},\end{equation}
	      where $R_{j}$ is the average rank of the $j$-th classifier. 
        The results for $F_{F}$ and $\chi^{2}_F$ for each type of kernel and noise are listed in \autoref{tab: FF and ChisqF classifier}. At the level of significance of $\alpha = 0.05$, the critical value is $F_{\alpha}(6, 84) = 2.21$. Since all $F_{F}$ values exceed this threshold, we conclude that there are statistically significant differences among the seven SVM models.
	    \begin{table}
		\centering
		\footnotesize
		\caption{The result of Friedman test on seven classifiers.}
		\begin{tabularx}{\hsize}{@{}@{\extracolsep{\fill}}ZZZZZZ@{}}
			\toprule
			Table & Kernel & evaluation index & Noise & $\chi^{2}_F$ & $F_{F}$\\
			\midrule
			\multirow{3}*{\autoref{tab:res classification linear acc}} &             \multirow{3}*{linear} &   
			\multirow{3}*{ACC}   
			& without noise          & 29.04 & 6.67\\
			& & & 25\% label noise   & 39.68 & 11.04\\
			& & & 25\% feature noise & 37.05 & 9.78\\
			\midrule
			\multirow{3}*{\autoref{tab:res classification linear F1}} &
			\multirow{3}*{linear} &   
			\multirow{3}*{$F_{1}$}   
			& without noise          & 30.65 & 7.23\\
			& & & 25\% label noise   & 48.99 & 16.73\\
			& & & 25\% feature noise & 33.46 & 8.28\\
			\midrule
			\multirow{3}*{\autoref{tab:res classification Gaussian acc}} &
			\multirow{3}*{Gaussianl} &   
			\multirow{3}*{ACC}   
			& without noise          & 28.08 & 6.35 \\
			& & & 25\% label noise   & 47.6 & 5.15\\
			& & & 25\% feature noise & 30.37 & 7.13 \\
			\midrule
			\multirow{3}*{\autoref{tab:res classification Gaussian F1}} &
			\multirow{3}*{Gaussian} &   
			\multirow{3}*{$F_{1}$}   
			& without noise          & 31.37 & 7.49\\
			& & & 25\% label noise   & 59.71& 27.60\\
			& & & 25\% feature noise & 32.86& 8.05\\
			\bottomrule
		\end{tabularx}
		\label{tab: FF and ChisqF classifier}
	\end{table}
        
        Next, we apply the Nemenyi post-hoc test to examine the specific distinctions among the classifiers. According to the Nemenyi test, two classifiers are considered significantly different if the difference in their average ranks exceeds the critical difference ($CD$). The $CD$ is computed as\begin{equation}CD = q_{0.1}(k)\sqrt{\frac{k(k+1)}{6N}}=2.693\times\sqrt{\frac{7\times(7+1)}{6\times15}}=2.12,\end{equation}where $q_{0.1}=2.693$. 
        We used $CD$ diagrams \autoref{fig:CD-ACC} and \autoref{fig:CD-F1} to compare the average rankings of each SVM with different kernels and noise types. The top line shows the average ranks, with colors changing from blue to black. Groups of algorithms with no significant differences are linked with a red line. 
     \begin{figure}[H]
		\centering
		\subfigure[without noise(linear)]{
			\includegraphics[scale=0.6]{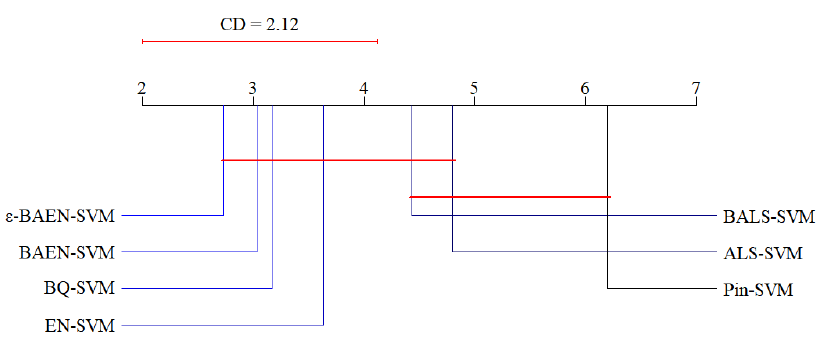}
		}
		\subfigure[25\% label noise(linear)]{
			\includegraphics[scale=0.6]{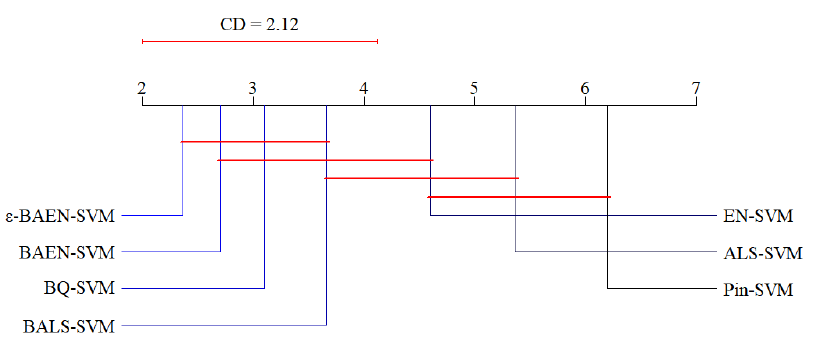}
		}
        \\
        \centering
		\subfigure[25\% feature noise(linear)]{
			\includegraphics[scale=0.6]{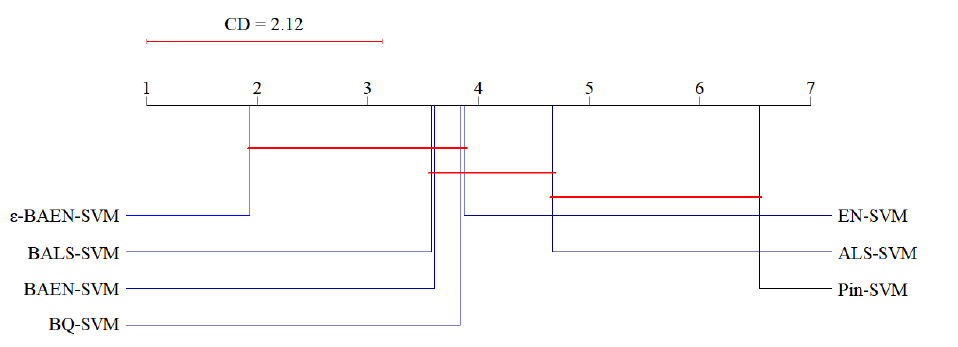}
		}
		\subfigure[without noise(RBF)]{
			\includegraphics[scale=0.6]{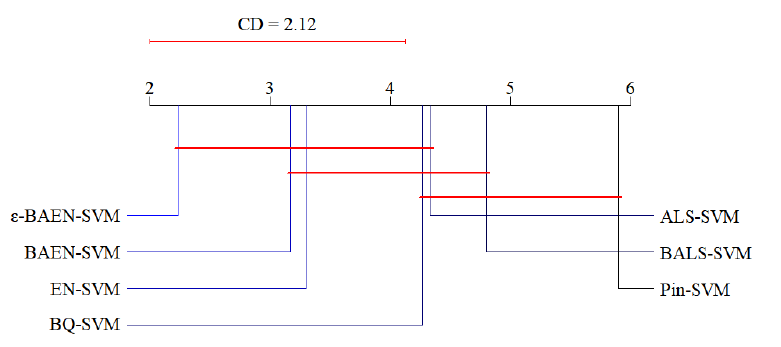}
		}
           \\
		\centering
		\subfigure[25\% label noise(RBF)]{
			\includegraphics[scale=0.6]{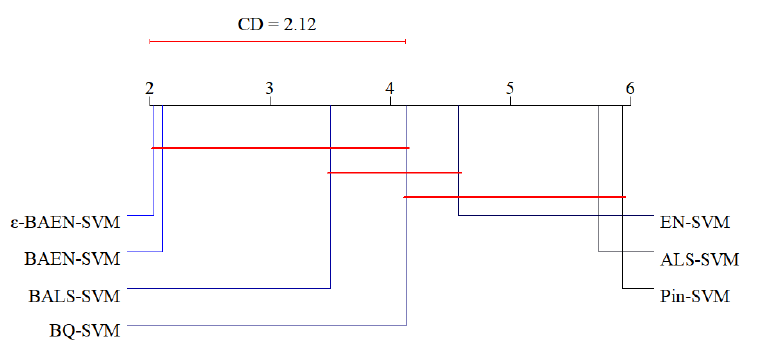}
		}
		\subfigure[25\% feature noise(RBF)]{
			\includegraphics[scale=0.6]{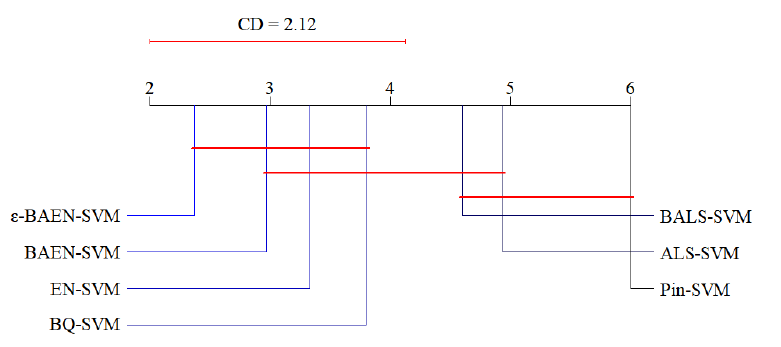}
		}
		\caption{Comparison ACC with the Nemenyi test}
		\label{fig:CD-ACC}
	\end{figure}
    
    \begin{figure}[H]
		\centering
		\subfigure[without noise(linear)]{
			\includegraphics[scale=0.6]{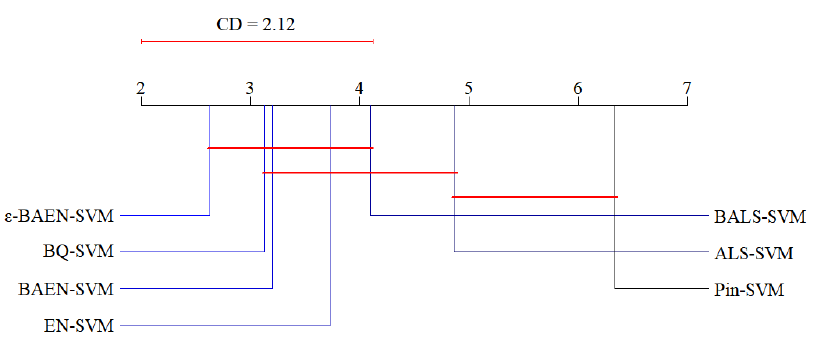}
		}
		\subfigure[25\% label noise(linear)]{
			\includegraphics[scale=0.6]{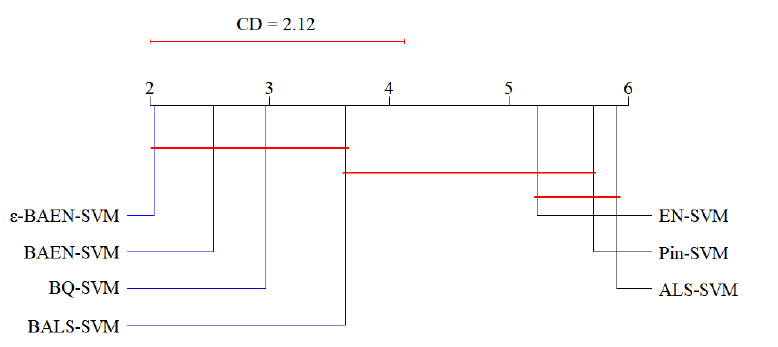}
		}
        \\
        \centering
		\subfigure[25\% feature noise(linear)]{
			\includegraphics[scale=0.6]{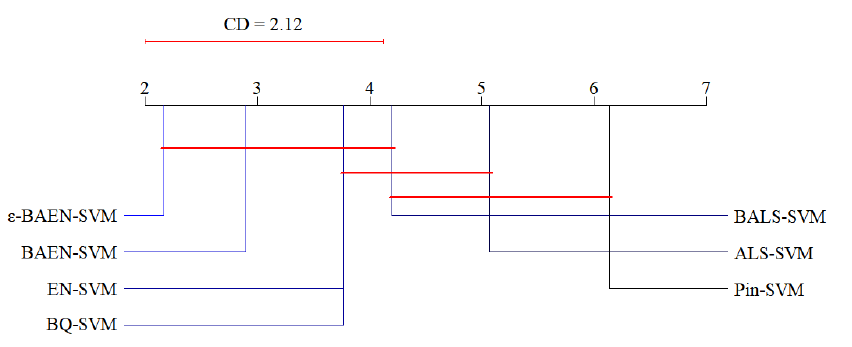}
		}
		\subfigure[without noise(RBF)]{
			\includegraphics[scale=0.6]{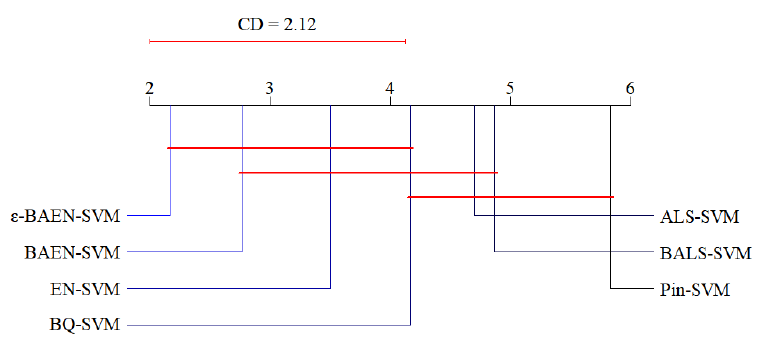}
		}
           \\
		\centering
		\subfigure[25\% label noise(RBF)]{
			\includegraphics[scale=0.6]{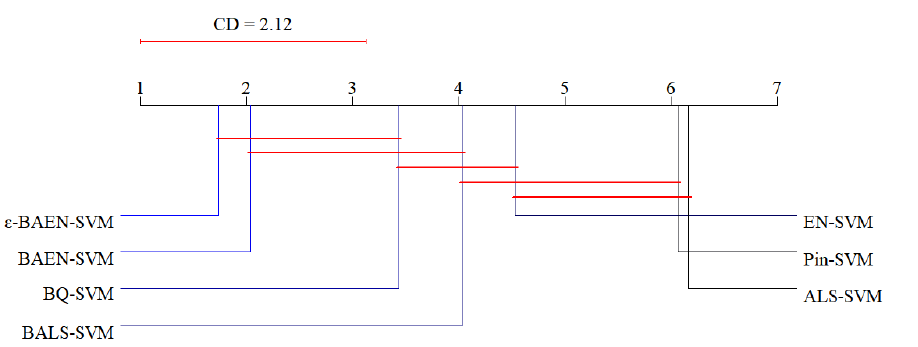}
		}
		\subfigure[25\% feature noise(RBF)]{
			\includegraphics[scale=0.6]{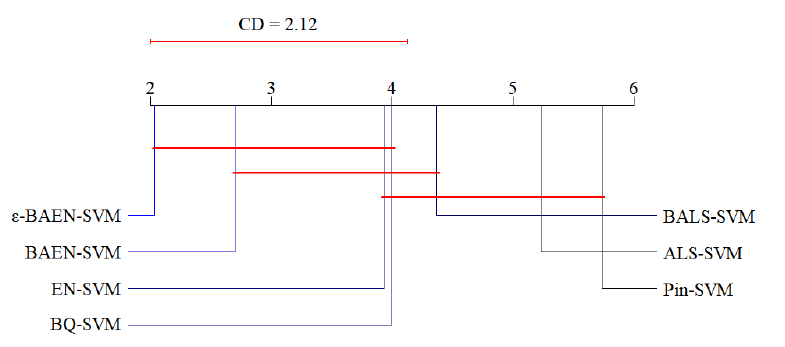}
		}
		\caption{Comparison $F_1$ with the Nemenyi test}
		\label{fig:CD-F1}
	\end{figure}
    As shown in  \autoref{fig:CD-ACC}, the $\varepsilon$-BAEN-SVM outperforms all other SVM models under the ACC evaluation metric. Its advantage becomes particularly pronounced when dealing with 25\% label noise and feature noise.
    Regarding label noise, \autoref{fig:CD-ACC}(b) and \autoref{fig:CD-F1}(e)show that both $\varepsilon$-BAEN-SVM and BAEN-SVM have comparable performance. They significantly surpass EN-SVM, Pin-SVM, and ALS-SVM. Notably, the significant difference between $\varepsilon$-BAEN-SVM and EN-SVM indicates that $\varepsilon$-BAEN-SVM effectively addresses EN-SVM's high sensitivity to label noise. When faced with 25\% feature noise in \autoref{fig:CD-ACC}(c) and \autoref{fig:CD-F1}(f), $\varepsilon$-BAEN-SVM shows even greater superiority, especially under linear kernel conditions. \autoref{fig:CD-F1}(a) and \autoref{fig:CD-F1}(f), reveal that under Gaussian kernel settings, $\varepsilon$-BAEN-SVM consistently achieves higher average ranks than under linear kernels. This highlights its distinct advantages.

    \section{Conclusion}\label{sec: Conclusion}
        This paper addresses the problems of existing support vector machines (SVMs), including low robustness to noise and lack of sparsity. Based on the $\varepsilon$-insensitive asymmetric elastic net loss and the RML framework, we propose a robust and sparse SVM model called $\varepsilon$-BAEN-SVM. Theoretical analysis shows that  $L_{\text{baen}}^{\varepsilon}$ is sparser than $L_{\text{baen}}$ and $L_{\text{aen}}$. In addition, $L_{\text{baen}}^{\varepsilon}$ has boundedness and asymmetry. Further theoretical analysis proves that $\varepsilon$-BAEN-SVM is insensitive to noise. This ensures good robustness in practical applications. To solve the non-convex problem of the nonlinear $\varepsilon$-BAEN-SVM, we design the HQ-ClipDCD algorithm. This algorithm transforms the original non-convex problem into a convex subproblem. The subproblem is a weighted $\varepsilon$-insensitive SVM with an asymmetric elastic net loss. This transformation provides a clear explanation for the model's robustness. Experimental results on simulated and real benchmark datasets show that $\varepsilon$-BAEN-SVM achieves better classification performance than competing models on both clean and noise-corrupted data. Statistical tests further confirm its superiority and robustness.
        
        The proposed $\varepsilon$-BAEN-SVM makes effective progress in improving model robustness and sparsity. However, some issues deserve further study. On small and medium-sized datasets, $\varepsilon$-BAEN-SVM with the HQ-ClipDCD algorithm already shows good classification performance and robustness. Nevertheless, each iteration requires solving a quadratic programming problem. This limits its application to large-scale datasets. Future work will focus on introducing low-rank approximation techniques for the kernel matrix. These techniques can reduce kernel computation and storage costs, thereby improving the scalability of the nonlinear $\varepsilon$-BAEN-SVM in big data scenarios. In addition, given the robust performance of $\varepsilon$-BAEN-SVM in noisy environments, we plan to apply it to high-risk fields with uneven data quality, such as medical diagnosis and financial fraud detection.

    \bibliography{main}

        \end{document}